\documentclass{article}

\PassOptionsToPackage{numbers, compress}{natbib}

\usepackage[preprint]{neurips_2026}
\usepackage{multirow}

\usepackage[utf8]{inputenc} 
\usepackage[T1]{fontenc}    
\usepackage{hyperref}       
\usepackage{url}            
\usepackage{booktabs}       
\usepackage{amsfonts}       
\usepackage{nicefrac}       
\usepackage{microtype}      
\usepackage{xcolor}         
\usepackage{tikz}           
\usepackage{wrapfig}
\usepackage{graphicx}
\usepackage{amsmath}
\newtheorem{proposition}{Proposition}
\usepackage{algorithm}
\usepackage{algpseudocode}

\usepackage{pifont}
\newcommand{\cmark}{\ding{51}}   
\newcommand{\xmark}{\ding{55}}   
\newcommand{\tmark}{\textasciitilde} 

\newcommand{\DKL}{D_{\mathrm{KL}}}
\newcommand{\DJS}{D_{\mathrm{JS}}}
\newcommand{\DJSjoint}{\DJS^{\mathrm{joint}}}

\title{Not all Jensen-Shannon Divergence Estimators are Equal}

\author{%
  Alba Garrido\textsuperscript{1} \quad 
  Alejandro Almodóvar\textsuperscript{1} \quad 
  Mar Elizo\textsuperscript{1} \\
  \textbf{Patricia A. Apellániz}\textsuperscript{1} \quad 
  \textbf{Santiago Zazo}\textsuperscript{1} \quad 
  \textbf{Juan Parras}\textsuperscript{1} \\
  \vspace{0.2cm}\\
  \textsuperscript{1}Information Processing and Telecommunications Center, ETSI Telecomunicación,  \\
  Universidad Politécnica de Madrid, Spain 
}

\begin{document}

\maketitle

\begin{abstract}
The Jensen-Shannon divergence is widely reported as a scalar measure of fidelity for synthetic tabular data. Yet, in practice, it is estimated from finite samples using protocols that are often underspecified. This creates a measurement problem. Although the population divergence is well defined, the empirical value depends on the estimator family, sampling protocol, calibration, dimensionality, and class balance. We show that different protocols can yield non-comparable values: marginal-based estimators ignore dependencies in the joint distribution and can severely underestimate divergence, while classifier-based estimators capture joint structure but exhibit strong estimator dependence. We systematically study this behavior across controlled settings with reference divergences and real-world synthetic tabular benchmarks. Our analysis reveals dependence blindness in marginal estimators, prior-shift bias under class imbalance, and estimator sensitivity in high dimensions. To address prior shift, we derive a closed-form posterior correction for classifier-based Jensen-Shannon estimation. Our results show that empirical Jensen-Shannon divergence values are inherently protocol-dependent, making explicit specification of the estimation procedure necessary for meaningful comparison. We provide practical guidelines and an open-source tool for estimator-aware Jensen-Shannon evaluation.
\end{abstract}

\section{Introduction}
\label{sec:intro}
Evaluating synthetic-data quality is a central challenge in machine learning, particularly for privacy-preserving generation and tabular tasks such as classification and risk prediction \cite{kaabachi2025scoping, pilgram2025consensus, achterberg2025fidelity}. Among validation metrics, the Jensen-Shannon divergence ($\DJS$) is widely used for its symmetry, boundedness, and information-theoretic foundation as a smoothed Kullback-Leibler divergence ($\DKL$) \cite{lin2002divergence, dorent2025connecting}. In tabular data, reported $\DJS$ values are often interpreted as quantitative evidence of distributional fidelity \cite{achterberg2025fidelity, espinosa2023quality, stenger2024evaluation}, yet this interpretation assumes that such values are well-defined and comparable across studies. Since no closed-form estimator exists in the general case, $\DJS$ must be estimated from finite samples \cite{aminian2021jensen, perez2008kullback, nguyen2010estimating} through often underspecified protocols. This raises a fundamental question: \emph{when two practitioners report a $\DJS$ estimate, are they measuring the same quantity?}

A common approach to estimating $\DJS$ computes feature-wise (marginal) divergences and averages them across dimensions \cite{xu2019ctgan, liu2024scaling}. Although efficient, this strategy ignores dependencies between variables and can underestimate joint discrepancies. However, matching marginal distributions does not guarantee preservation of the data-generating process. For instance, in clinical data, variables such as age and comorbidity are often strongly dependent, with older patients exhibiting a higher burden of comorbidities \cite{barnett2012epidemiology}. A synthetic dataset may faithfully reproduce the marginal distribution of each variable while failing to capture such relationships, yielding nearly identical marginals but substantially different multivariate structure, as illustrated in Figure \ref{fig:marginals}. Recent work confirms that marginal metrics are insufficient and must be complemented by joint-distribution measures \cite{riasat2026dependence, walia2020synthesising}.

An alternative estimates $\DJS$ on the joint distribution using probabilistic classifiers, leveraging the connection between optimal discriminators and density ratios \cite{sugiyama2012density}. Framing divergence estimation as binary classification captures full multivariate discrepancies without relying on marginal decompositions, making them suitable for detecting differences arising from dependencies between variables \cite{lopez2016C2ST}. However, this flexibility has a cost: the estimate is no longer solely a property of the distributions, but also of the estimator. Yet, to the best of our knowledge, there is no systematic analysis of how classifier-based $\DJS$ estimators behave across model classes and data regimes.

We address this gap with a systematic empirical study of classifier-based $\DJS$ estimation across classifier families. We evaluate estimators in controlled settings that isolate dependence structure, sample size, dimensionality, and class imbalance, and on real-world synthetic data from Variational Autoencoders (VAEs) \cite{kingma2013vae} and Generative Adversarial Networks (GANs) \cite{goodfellow2014generative}. Although we focus on tabular data, the measurement perspective we advocate extends naturally to other data modalities, such as images, where similar fidelity-assessment challenges arise \cite{kelkar2023assessing}.

Our central premise is simple: not all $\DJS$ estimates are equal, because not all estimators are equal. We show that different estimation protocols on the same data can yield substantially different $\DJS$ values, leading to inconsistent model-evaluation conclusions. Thus, empirical divergence values should be interpreted as protocol-dependent measurements, rather than properties of generated data. Our contributions are:
\begin{itemize}
    \item We formalize divergence estimation as a protocol-dependent measurement and argue that reported values require an explicit estimator specification (model, calibration, and protocol).
    \item We identify three failure modes (dependence structure, class imbalance, and dimensionality) and propose a closed-form posterior correction restoring consistency under prior shift.
    \item We show that marginal estimators severely underestimate divergence by ignoring joint dependencies, while classifier estimators vary substantially across model and data regimes.
    \item We characterize the accuracy-cost trade-off across estimators and release practical guidelines and an open-source implementation for estimator-aware $\DJS$ evaluation. \footnote{Available at \url{https://github.com/AlbaGarridoLopezz/jensenshannondivergence} 
and PyPI (\texttt{pip install jensenshannondivergence}).}
    
\end{itemize}
\section{Background}
\label{sec:background}

\subsection{Jensen-Shannon divergence estimation}
\label{subsec:js_divergence}
Let $P$ and $Q$ be probability distributions over the same set $\mathcal{X}$ with densities $p(x)$ and $q(x)$. The Jensen-Shannon divergence ($\DJS$) is defined as
\begin{equation}
\mathrm{D_{JS}}(P \parallel Q)
=
\frac{1}{2}\mathrm{D_{KL}}(P \parallel M)
+
\frac{1}{2}\mathrm{D_{KL}}(Q \parallel M);
\qquad
\mathrm{D_{KL}}(P \parallel Q)
=
\int_{\mathcal{X}} p(x)\log\frac{p(x)}{q(x)}\,dx
\end{equation}
where $M=\frac{1}{2}(P+Q)$ is the mixture distribution, $\DKL$ denotes the Kullback-Leibler divergence and $\log$ is taken in base 2. Under this convention, $\DJS$ is symmetric and bounded in $[0,1]$. Despite these appealing properties, computing $\mathrm{D_{JS}}(P \parallel Q)$ is generally intractable in practice: $M$ rarely admits a closed-form expression, even for simple parametric families \cite{nielsen2019jensen}, and real-world distributions are typically accessible only through finite samples \cite{goodfellow2014generative}. Thus, $\DJS$ must be estimated from data. 

\subsection{Marginal estimations do not recover the joint divergence}
A common strategy in synthetic data evaluation is to approximate $\DJS$ by averaging marginal divergences across dimensions. Given a $d$-dimensional random vector $X = (X_1, \dots, X_d)$,
\begin{equation}
D_{\mathrm{JS}}^{\mathrm{marg}}(P \parallel Q)
=
\frac{1}{d} \sum_{i=1}^{d}
\mathrm{D_{JS}}\big(P(X_i) \parallel Q(X_i)\big).
\end{equation}
where $P(X_i)$ and $Q(X_i)$ denote the marginal distributions of the $i$-th component. While computationally efficient, this estimator fundamentally ignores dependencies between variables, effectively treating each dimension independently. 

\begin{proposition}
There exist distributions $P$ and $Q$ over $\mathbb{R}^d$ with identical one-dimensional marginals, i.e., $P(X_i)=Q(X_i)$ for all $i=1$, but $P\neq Q$; hence $D^{\mathrm{marg}}_{\mathrm{JS}}(P \parallel Q)=0$ while $\DJS(P \parallel Q)>0$.
\end{proposition}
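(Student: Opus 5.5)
The argument is a direct construction, and the natural choice is a pair of Gaussians that agree coordinatewise but differ in their correlation structure. We treat the case $d \ge 2$; the statement's ``for all $i=1$'' should read $i=1,\dots,d$. For $d=1$ the claim cannot hold, since the single marginal is the whole distribution.

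\textbf{Construction.} Take $P=\mathcal{N}(0,I_d)$ and $Q=\mathcal{N}(0,\Sigma_\rho)$, where $\Sigma_\rho$ equals the identity except that $(\Sigma_\rho)_{12}=(\Sigma_\rho)_{21}=\rho$ for some fixed $\rho\in(0,1)$. Since $|\rho|<1$, the matrix $\Sigma_\rho$ is positive definite, so $Q$ is a valid nondegenerate Gaussian with a density.

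\textbf{Step 1: the marginals agree, so the marginal estimator vanishes.} Every one-dimensional marginal of a Gaussian is Gaussian, with variance given by the corresponding diagonal entry of the covariance. Both covariance matrices have unit diagonal, so $P(X_i)=Q(X_i)=\mathcal{N}(0,1)$ for every $i$. Each term in $D^{\mathrm{marg}}_{\mathrm{JS}}$ is therefore the divergence of a distribution from itself, which is $0$, and hence $D^{\mathrm{marg}}_{\mathrm{JS}}(P\parallel Q)=0$.

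\textbf{Step 2: the joint distributions differ.} Since $I_d\neq\Sigma_\rho$ and a Gaussian is determined by its mean and covariance, $P\neq Q$. Concretely, the densities $p$ and $q$ differ on a set of positive Lebesgue measure. Both densities are continuous, and $p(x)\neq q(x)$ at any point with $x_1x_2\neq 0$ (for instance $x=(1,1,0,\dots,0)$), so by continuity they differ on a whole neighbourhood of that point.

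\textbf{Step 3: strict positivity of $\DJS$.} This is the only step that needs an argument rather than a computation, and the key fact is that $\DJS$ vanishes only when the two distributions coincide. Let $M=\tfrac12(P+Q)$. Gibbs' inequality gives $\DKL(P\parallel M)\ge 0$, with equality if and only if $p=m$ almost everywhere. But $p=m$ a.e.\ holds exactly when $p=q$ a.e., and Step 2 shows this fails. So $\DKL(P\parallel M)>0$. The second term $\DKL(Q\parallel M)$ is nonnegative, again by Gibbs' inequality, and therefore $\DJS(P\parallel Q)>0$.

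\textbf{Alternative discrete construction.} If one prefers an explicit number, a discrete example works on $\{0,1\}^d$. Let $P$ make $X_1=X_2$ a fair coin, and let $Q$ make $X_1$ and $X_2$ independent fair coins. In both cases set the remaining coordinates to be independent fair coins. All marginals are then $\mathrm{Bernoulli}(1/2)$ under both $P$ and $Q$. A direct computation on the $(X_1,X_2)$ block gives
\[
\DJS(P\parallel Q)=\tfrac{3}{2}-\tfrac{3}{4}\log_2 3\approx 0.311>0.
\]
Either construction proves the proposition.
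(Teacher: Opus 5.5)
Your proof is correct, and your main route differs from the paper's. The paper's proof uses only a discrete example on $\{0,1\}^2$. There $P$ puts mass $1/2$ on $(0,0)$ and $(1,1)$, and $Q$ puts mass $1/2$ on $(0,1)$ and $(1,0)$. All marginals are $\mathrm{Bernoulli}(1/2)$, but the joint supports are disjoint, so $\DJS(P\parallel Q)=1$ follows immediately, with no positivity lemma needed. This gives a stronger message than the proposition states: the marginal estimator can return $0$ when the joint divergence attains its maximum.

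Your Gaussian construction does not give an explicit value and needs the equality case of Gibbs' inequality to get strict positivity. In exchange, it is a genuinely continuous example with Lebesgue densities on $\mathbb{R}^d$ for every $d\ge 2$. That matches the paper's density-based definition, and its own correlated-Gaussian experiments, more literally than a distribution on $\{0,1\}^2$ does.

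Your discrete alternative is closest to the paper. However, you pair ``perfectly correlated'' with ``independent'' rather than with ``perfectly anti-correlated'', so the supports overlap. You therefore get $\tfrac32-\tfrac34\log_2 3\approx 0.311$, which checks out, rather than the maximal value $1$. Your remarks that ``$i=1$'' should read $i=1,\dots,d$ and that one needs $d\ge 2$ are both right.

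There is one slip in Step 2. It is not true that $p(x)\neq q(x)$ at every point with $x_1x_2\neq 0$. On the anti-diagonal $x=(t,-t,0,\dots,0)$ one has $\ln p(x)-\ln q(x)=\frac{\rho}{1-\rho}t^2+\frac12\ln(1-\rho^2)$. This vanishes at $t^2=-\frac{(1-\rho)\ln(1-\rho^2)}{2\rho}>0$, so the densities coincide there. Your specific point $(1,1,0,\dots,0)$ does work, since there $p<q$. In any case the sentence is unnecessary: $P\neq Q$ already forces $p\neq q$ on a set of positive measure.
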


\begin{wrapfigure}{r}{0.48\textwidth}
\vspace{-8pt}
\centering
\includegraphics[width=0.47\textwidth]{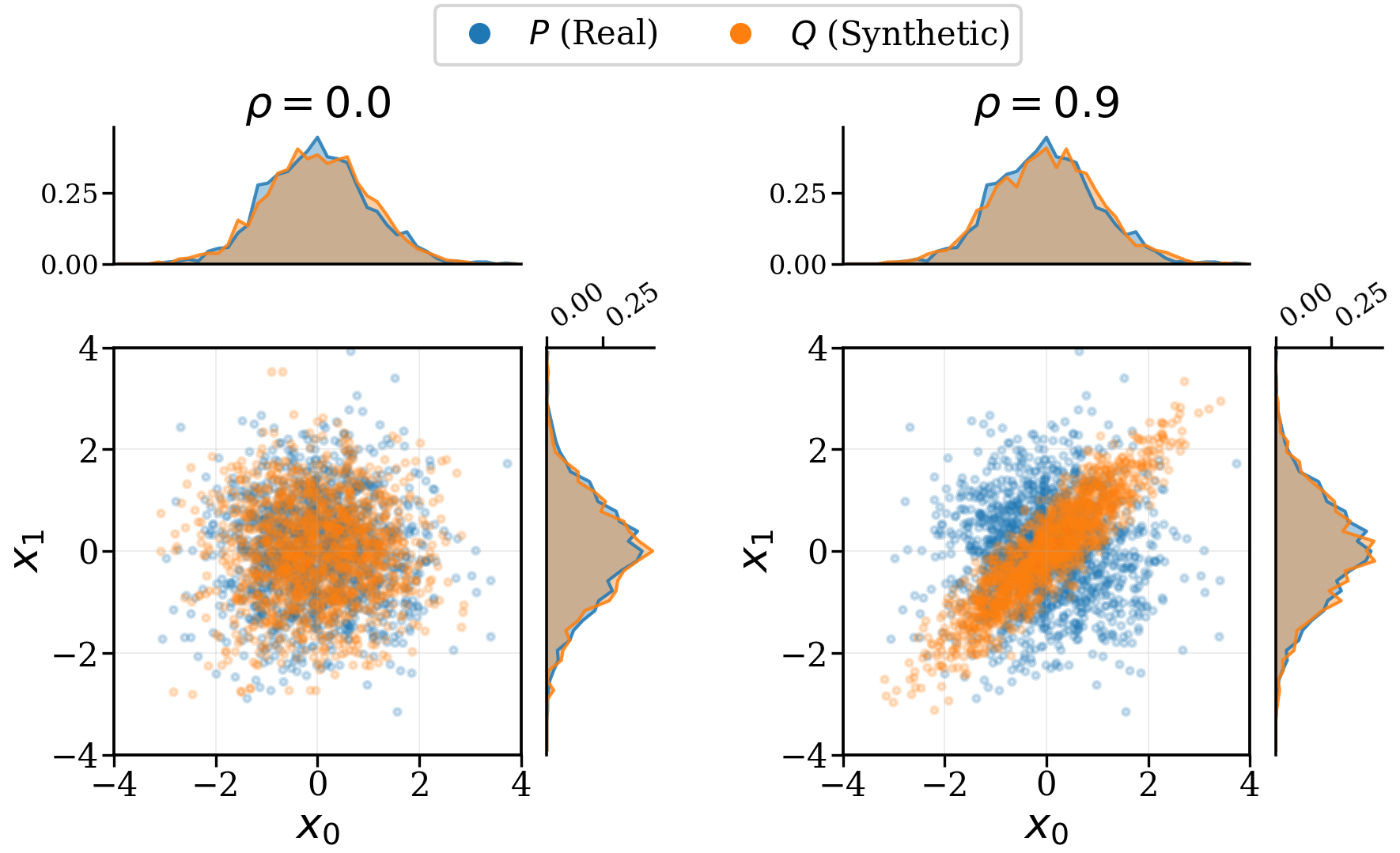}
\caption{Marginals are preserved while the joint dependence structure changes, causing marginal-based estimators to miss the discrepancy.}
\label{fig:marginals}
\vspace{-10pt}
\end{wrapfigure}
A simple construction uses two binary variables that are perfectly correlated under $P$ and perfectly anti-correlated under $Q$; see Appendix \ref{app:limitations_marginals}. Consequently, marginal estimators do not, in general, recover the joint divergence $\mathrm{D_{JS}}(P \parallel Q)$. Rather, they should be interpreted as marginal-fidelity metrics: a low marginal $\DJS$ may reflect accurate univariate statistics while masking substantial discrepancies in the dependence structure. Figure \ref{fig:marginals} illustrates this limitation: although $P$ and $Q$ share identical marginals, differences in correlation $\rho$ alter their joint structure, which marginal-based estimators fail to capture.

\subsection{Classifier-based divergence estimation}  
\label{subsec:Classifier-based Divergence Estimation}
Estimating $\DJS$ over the joint distribution without densities can be reduced to a density ratio estimation problem \cite{sugiyama2012density}. Consider a binary classification problem where samples from $P$ are labeled $y=1$ and samples from $Q$ are labeled $y=0$ (see Figure \ref{fig:js_classifier}). The Bayes-optimal classifier is given by
\begin{equation}
D^*(x) = \mathbb{P}(y=1 \mid x) = \frac{p(x)}{p(x)+q(x)} 
\label{eq:bayes_optimal}
\end{equation}
which implicitly encodes the density ratio $p(x)/q(x)$ through its posterior probabilities. Substituting this expression into the definition of $\DJS$ yields the representation
\begin{equation}
\mathrm{D_{JS}}(P \parallel Q) = \frac{1}{2}\mathbb{E}_{x\sim P}
\left[\log 2D^*(x)\right] + \frac{1}{2}\mathbb{E}_{x\sim Q}
\left[\log 2(1-D^*(x))\right],
\label{eq:js_e}
\end{equation}
establishing a direct link between probabilistic classification and divergence estimation \cite{goodfellow2014generative}. This formulation enables the estimation of $\DJS$ without explicitly modeling the underlying densities \cite{nguyen2010estimating}. Details on the derivation are provided in Appendix \ref{app:Density}. In practice, $D^*(x)$ is approximated by a parametric classifier $\hat{D}(x)$ trained on labeled samples. \citet{apellaniz2024divergence} showed the effectiveness of this approach using a fixed Multilayer Perceptron (MLP) architecture across controlled and real-world settings. However, their analysis is restricted to a single classifier family and does not examine how reliability varies with model choice, sample size, class imbalance, or dimensionality. 

\begin{wrapfigure}{r}{0.48\textwidth}
\vspace{-8pt}
\centering
\resizebox{0.46\textwidth}{!}{%
\begin{tikzpicture}[
    box/.style={draw, rectangle, rounded corners, align=center,
                minimum width=2.5cm, minimum height=1cm},
    arr/.style={->, thick}
]
\node[box] (p) at (0,0.8) {Samples from $P(x)$};
\node[box] (q) at (0,-0.8) {Samples from $Q(x)$};
\node[box] (clf) at (4,0) {Probabilistic\\classifier};
\node[box] (js) at (8,0) {$\widehat{\mathrm{D}}_{JS}(P \parallel Q)$};
\draw[arr] (p.east) -- node[pos=0.55, above=7pt] {$y=1$} (clf.west);
\draw[arr] (q.east) -- node[pos=0.55, below=7pt] {$y=0$} (clf.west);
\draw[arr] (clf.east) -- node[above] {$\hat{D}(x)$} (js.west);
\end{tikzpicture}%
}
\caption{Classifier-based $\DJS$ estimation. Samples from $P(x)$ and $Q(x)$ are labeled to train a probabilistic classifier, whose output $\hat{D}(x)$ estimates the divergence.}
\label{fig:js_classifier}
\vspace{-8pt}
\end{wrapfigure}
\paragraph{Target quantity versus estimation error.} We distinguish two error sources. Marginal-based estimators commit a \emph{target mismatch}: they do not approximate $\mathrm{D_{JS}}(P \parallel Q)$ but instead average univariate divergences, ignoring inter-variable dependencies. Classifier-based estimators target the true joint divergence $D_{\mathrm{JS}}^{\mathrm{joint}}(P \parallel Q)$, a well-defined function of the underlying distributions, but introduce an \emph{estimation error}: in finite-sample regimes, $\widehat{D}_{\mathrm{JS}}$ depends on the model, its inductive bias, and calibration. Thus, marginal methods estimate the wrong quantity, while classifier-based methods estimate the right quantity with an imperfect instrument.

\paragraph{Sources of sensitivity in classifier-based divergence estimation.}
\label{subsec:sources}
Classifier-based approaches estimate divergences over high-dimensional joint distributions, but their accuracy depends on the estimator and the underlying distributions. We consider four sources of sensitivity:

\begin{itemize}
    \item \textbf{Finite-sample effects.} Estimating the Bayes-optimal discriminator from finite data introduces variance and bias. Divergence estimation is known to be unreliable in small-sample regimes \cite{menendez1997js}, and neural estimators may be high-variance under limited data \cite{belghazi2018mutual}.

    \item \textbf{Classifier capacity and inductive bias.} Classifier families approximate decision boundaries differently. Underparameterized models may underestimate divergence, whereas high-capacity discriminators may saturate or overfit sampling artifacts \cite{arjovsky2017towards}. This bias--variance trade-off is central to discriminator-based $f$-divergence estimation \cite{nguyen2010estimating, nowozin2016f}.

    \item \textbf{High dimensionality.} Density-ratio estimation becomes harder as dimensionality grows \cite{perez2008kullback}. High dimensions amplify sparsity and sample complexity, leading to high-variance or biased estimates \cite{beyer1999nearest, verleysen2005curse, donahue2016adversarial, theis2015note}.

    \item \textbf{Class imbalance.} Unequal sampling from $P$ and $Q$ changes the class priors seen by the discriminator, inducing a prior shift in its posterior outputs. This biases the resulting $\DJS$ estimate unless corrected, motivating the ratio correction introduced in Section \ref{subsec:prior_correction} \cite{menendez1997js, sugiyama2012density}.
\end{itemize}

\subsection{Related work}
\label{subsect:related_works}

Synthetic tabular data evaluation has received increasing attention in generative modeling and privacy-preserving data generation \cite{stenger2024evaluation}. Existing frameworks use metrics ranging from statistical similarity to downstream utility \cite{rashidian2020smooth, snoke2018general}, yet no standard metric fully captures distributional similarity between real and synthetic 
data \cite{hernadez2023synthetic}. Many frameworks assess fidelity through marginal divergence criteria. Libraries such as \textit{Synthcity} \cite{qian2023synthcity} and \textit{Syndat} \cite{syndat} approximate $\DJS$ by averaging feature-wise discrepancies across dimensions. Despite their simplicity, this approach assumes feature independence and misses joint discrepancies, motivating our work. Alternative joint-structure metrics include Maximum Mean Discrepancy (MMD) \cite{gretton2012kernel} and Wasserstein distance \cite{arjovsky2017wasserstein}. However, both degrade in high dimensions: MMD is sensitive to kernel and bandwidth selection \cite{nguyen2021distributional, harder2021dp}, while the Wasserstein distance suffers slow convergence and high computational cost \cite{weed2019sharp, genevay2018learning}.
 
A complementary line estimates divergences from samples using probabilistic classifiers. Classifier-based two-sample tests (C2ST) \cite{lopez2016C2ST} and density ratio estimation methods \cite{sugiyama2012density} reduce the distribution comparison to a binary classification problem via the connection between optimal discriminators and density ratios. In particular, \citet{goodfellow2014generative} showed that under an optimal discriminator, the GAN objective minimizes an affine transformation of $\DJS$, enabling divergence estimation without explicit density modeling. More recently, \cite{apellaniz2024divergence} shows that MLPs can effectively approximate density ratios for evaluating synthetic tabular data. These approaches fit within $f$-divergence estimation, which provides a unifying theoretical perspective on divergence estimation from samples \cite{nguyen2010estimating, nowozin2016f}. Despite their flexibility, classifier-based $\DJS$ estimators remain insufficiently characterized across models and data regimes. The effects of model capacity, sample size, and dimensionality on the estimate are understudied for synthetic tabular data evaluation \cite{arjovsky2017towards, gulrajani2017improved}.

\section{Estimator-aware Jensen-Shannon divergence estimation}
\label{sec:methodology}

\subsection{Empirical Jensen-Shannon divergence as a measurement protocol}
\label{subsec:measurement_protocol}

A reported empirical $\DJS$ value is not a direct observation of an intrinsic distributional property, but the output of an estimation protocol. We make this explicit by denoting the empirical divergence as $\widehat{D}_{\mathrm{JS}}^{\,\mathcal{A}}(P,Q),$ where $\mathcal{A} = (\mathcal{F}, \mathcal{S}, \mathcal{C}, r, \epsilon)$ specifies the estimation protocol. $\mathcal{F}$ denotes the classifier family, $\mathcal{S}$ the train/validation/test split, $\mathcal{C}$ the probabilistic scoring or calibration strategy, $r=|Q_{\mathrm{train}}|/|P_{\mathrm{train}}|$ the empirical class-prior ratio, and $\epsilon$ the probability clipping threshold used for numerical stability (see Appendix \ref{app:numerical-stability}). This notation emphasizes that empirical $\DJS$ values are protocol-dependent measurements: two practitioners using different protocols $\mathcal{A}$ and $\mathcal{A}'$ may obtain different values, even when evaluating the same pair of datasets.

Given datasets sampled from $P$ and $Q$, our goal is to estimate $\mathrm{D_{JS}}(P \parallel Q)$ without access to the underlying densities. Algorithm \ref{alg:estimator_aware_djs} summarizes the estimator-aware protocol used throughout the paper. The classifier-based estimator is formalized in Section \ref{subsec:class_js}, the prior correction under class imbalance is derived in Section \ref{subsec:prior_correction}, and the estimator families considered are described in Section \ref{subsec:classifiers}. In all experiments, protocol components such as the data split, hyperparameter selection criterion, correction rule, clipping value, random seeds, and runtime are fixed or explicitly reported to make empirical $\DJS$ values reproducible and comparable.

\begin{algorithm}[ht]
\caption{Estimator-aware classifier-based $\DJS$ estimation}
\label{alg:estimator_aware_djs}
\begin{algorithmic}[1]
\Require $X_P \sim P$, $X_Q \sim Q$; classifier family $\mathcal{F}$; threshold $\tau$; clipping $\epsilon$
\State $\mathcal{D} \leftarrow \{(x,1): x\in X_P\}\cup\{(x,0): x\in X_Q\}$
\State Split $\mathcal{D}$ into train/validation/test sets
\State Fit $\hat{D}_{\mathcal{F}}(x)\approx \mathbb{P}(y=1\mid x)$ and tune hyperparameters (if applicable)
\State Obtain $\hat{D}_{\mathcal{F}}(x)$ on the test set
\State Let $r$ be the empirical class-prior ratio from Equation~\ref{eq:class_prior_ratio}
\If{$|r-1|>\tau$}
    \State Set $\hat{D}_{\mathrm{corr}}(x)$ by applying Equation~\ref{eq:prior_correction} to $\hat{D}_{\mathcal{F}}(x)$
\Else
    \State $\hat{D}_{\mathrm{corr}}(x)\leftarrow \hat{D}_{\mathcal{F}}(x)$
\EndIf
\State $\widetilde{D}_{\mathcal{F}}(x)\leftarrow
\mathrm{clip}\!\left(\hat{D}_{\mathrm{corr}}(x),\epsilon,1-\epsilon\right)$
\State Compute $\widehat{D}_{\mathrm{JS}}^{\mathcal{A}}(P,Q)$ using Equation~\ref{eq:class_js_estimator}
\State Report $\widehat{D}_{\mathrm{JS}}^{\mathcal{A}}$, uncertainty across seeds, runtime, and protocol details
\end{algorithmic}
\end{algorithm}

\subsection{Classifier-based joint Jensen-Shannon divergence estimator}
\label{subsec:class_js}

Using Equation \ref{eq:js_e}, we estimate $\DJS$ using the posterior output of a probabilistic discriminator. For a classifier family $\mathcal{F}$, let $\hat{D}_{\mathcal{F}}(x) \approx \mathbb{P}(y=1 \mid x)$ denote the estimated probability that a sample comes from $P$. After applying prior correction when needed and clipping for numerical stability, we denote the posterior used for evaluation by $\widetilde{D}_{\mathcal{F}}(x)$. The classifier-based estimator is
\begin{equation}
\widehat{D}_{\mathrm{JS}}^{\mathcal{A}}(P \parallel Q)
= \frac{1}{2}\mathbb{E}_{x\sim P}
\left[\log\big(2\widetilde{D}_{\mathcal{F}}(x)\big)\right]
+ \frac{1}{2}\mathbb{E}_{x\sim Q}
\left[\log\big(2(1-\widetilde{D}_{\mathcal{F}}(x))\big)\right].
\label{eq:class_js_estimator}
\end{equation}
The posterior $\widetilde{D}_{\mathcal{F}}(x)$ corresponds to the classifier output after prior correction when needed and clipping to $[\epsilon,1-\epsilon]$ for numerical stability (see Appendix \ref{app:numerical-stability}). When $\widetilde{D}_{\mathcal{F}}(x)$ matches the balanced Bayes-optimal discriminator, this estimator recovers the population $\DJS$. In finite samples, deviations from this ideal depend on the classifier family, calibration quality, and data regime. Importantly, $\widehat{D}_{\mathrm{JS}}^{\mathcal{A}}$ depends on calibrated posterior values, not only on the induced decision boundary: classifiers with similar two-sample accuracy may yield different divergence estimates if their posteriors differ in calibration or sharpness \cite{lopez2016C2ST}. Thus, probabilistic scoring and calibration are part of the measurement protocol rather than implementation details.

\subsection{Prior-shift correction under imbalanced sampling}
\label{subsec:prior_correction}

The estimator in Equation \ref{eq:class_js_estimator} implicitly assumes balanced sampling from $P$ and $Q$. In practice, the labeled training data may be imbalanced, inducing a prior shift in the learned discriminator. Let
\begin{equation}
\pi_1 = P(y=1), \quad \pi_0 = P(y=0),
\qquad
r = \frac{\pi_0}{\pi_1}
= \frac{|Q_{\mathrm{train}}|}{|P_{\mathrm{train}}|},
\label{eq:class_prior_ratio}
\end{equation}
where $\pi_1$ and $\pi_0$ denote the class priors, and $r$ is the empirical class-prior ratio.

Under balanced sampling ($\pi_1 = \pi_0$), the Bayes-optimal discriminator satisfies Equation \ref{eq:bayes_optimal}, which matches the symmetric form underlying the $\DJS$ estimator. Under class imbalance, Bayes' rule yields
\begin{equation}
D^*(x) = \frac{p(x)\,\pi_1}{p(x)\,\pi_1 + q(x)\,\pi_0}
= \frac{p(x)}{p(x) + r\, q(x)},
\end{equation}
Thus, the uncorrected estimator no longer recovers the balanced $\DJS$ objective. We correct this prior shift with a post-hoc adjustment to the discriminator output. Here, $D(x)$ denotes a generic posterior; in Algorithm \ref{alg:estimator_aware_djs}, it corresponds to $\hat{D}_{\mathcal{F}}(x)$. Given a discriminator posterior $D(x)$ trained under priors $(\pi_1,\pi_0)$, the corrected posterior corresponding to balanced priors is
\begin{equation}
D_{\mathrm{corr}}(x)
=
\frac{\frac{D(x)}{\pi_1}}
{\frac{D(x)}{\pi_1}+\frac{1-D(x)}{\pi_0}}
=
\frac{rD(x)}{1+(r-1)D(x)}.
\label{eq:prior_correction}
\end{equation}
The last equality follows from substituting $\pi_0=r\pi_1$. Under well-calibrated posteriors, the correction maps the discriminator output learned under imbalanced priors to the balanced-prior posterior required by the $\DJS$ estimator. It does not correct for model misspecification, finite-sample error, or posterior miscalibration, and should therefore be interpreted as a necessary protocol adjustment rather than a guarantee of estimator consistency. We evaluate its impact in Section \ref{sec:results}.
 
\subsection{Estimator families and protocol components}
\label{subsec:classifiers} 
We select estimator families to span different inductive biases and capacity regimes. The goal is not to identify a universally best classifier, but to characterize how $\widehat{D}_{\mathrm{JS}}^{\mathcal{A}}$ changes when the density ratio is approximated by different model classes. We include linear, tree-based, neural, and pretrained models, covering different levels of capacity, computational cost, and suitability for tabular data. Tree-based models and neural networks are widely used competitive approaches for tabular classification \cite{mcelfresh2023neural}.

\begin{itemize}
    \item \textbf{Linear models}: logistic regression (LogReg) and polynomial logistic regression (LR-Pol). While both are low-capacity estimators, LR-Pol scales combinatorially with dimensionality and degree ($\mathcal{O}(d^k)$), making it computationally expensive in high-dimensional regimes.
    \item \textbf{Tree-based models}: random forests (RF) and gradient boosting (XGBoost), which capture nonlinear decision boundaries with moderate complexity.
    \item \textbf{Neural networks}: multilayer perceptrons (MLP), which provide high-capacity function approximation and are commonly used as flexible discriminators.
    \item \textbf{Pretrained models}: TabPFN \cite{hollmann2022tabpfn}, a pretrained in-context learner with a distinct inductive bias and strong performance in small-data regimes. TabPFN was originally designed for datasets with fewer than 1,000 samples and 100 features.
\end{itemize} 

\subsection{Reporting and reliability diagnostics}
\label{subsec:reporting_protocol}

Because empirical $\DJS$ values are protocol-dependent, reporting a scalar divergence without its estimation protocol is incomplete. We report each estimate with the protocol components that determine it: classifier family, data split, sampling ratio, correction rule, probability clipping, hyperparameter selection, random seeds, and runtime. In controlled experiments, where a Monte Carlo (MC) reference $\mathrm{D_{JS}}^{\star}$ is available, reliability is assessed by the absolute estimation error $ \left|\widehat{D}_{\mathrm{JS}}^{\mathcal{A}}(P,Q)-\mathrm{D_{JS}}^{\star}\right|.$ In real-world experiments, where the true divergence is unavailable, reliability is assessed through seed consistency and disagreement across estimator families. Large cross-estimator disagreement indicates protocol sensitivity: the reported divergence should not be interpreted as a standalone property of the synthetic data, but also as a consequence of the measurement protocol.

\section{Empirical Evaluation}
\label{sec:results}

\subsection{Evaluation protocol}
\label{sec:exp} 

We evaluate empirical $\DJS$ estimation across three controlled failure regimes and two real-world benchmarks, treating estimation as a measurement protocol rather than a standalone classifier benchmark. The controlled settings isolate target mismatch in marginal estimators, prior-shift bias under class imbalance, and estimator sensitivity under increasing dimensionality (see Section \ref{subsec:sources}).

\begin{table}[ht]
\centering
\scriptsize
\setlength{\tabcolsep}{2pt}
\small
\caption{Empirical settings. $M$ and $L$ denote training and evaluation samples. Variable parameters: $\rho$ (dependence shift), $|Q|/|P|$ (imbalance), \emph{gap} (Gaussian mean separation), $d$ (dimensionality), and dataset/generator (real-world).}
\label{tab:experiments_c}
\begin{tabular}{ccc}
\toprule
 \textbf{Evaluation setting} & \textbf{Variable parameter} & \textbf{Fixed setup} \\
\midrule
 Target mismatch in marginal estimators & $\rho \in \{0.0,\dots,0.9\}$ & $M=L=2{,}000$\\
 Class imbalance  & $|Q|/|P| \in \{0.1,\dots,1.0\}$ & $M=L=2{,}000$, \emph{gap} $\in \{0.3,0.7,1.0\}$ \\
High-dimensional sensitivity & $d \in \{2,10,25,40,50\}$ & $M=L=2{,}000$ \\
Real-world benchmarks 
& Dataset / generator 
& High-sample; Low-sample \\
\bottomrule
\end{tabular}
\end{table}

Table \ref{tab:experiments_c} summarizes all configurations. In all controlled settings $D_{\mathrm{JS}}^{\star}$ is approximated via MC (Appendix \ref{app:mc}), and accuracy is measured by the mean absolute error (MAE) between $\widehat{D}_{\mathrm{JS}}$ and $D_{\mathrm{JS}}^{\star}$. For each configuration, $M$ samples per distribution are used for training; an independent set of $2L$ samples is split evenly into validation and test subsets of size $L$ each. The validation subset is used for hyperparameter and model selection, while the test data are used only for the reported estimate. Controlled experiments use $M=L=2{,}000$. Real-world benchmarks use two sample-budget regimes with fixed train/validation/test proportions: high-sample $H=(N,M,L)=(10k,7.5k,1k)$ and low-sample $L_s=(1k,750,100)$. Ratio correction is applied when the empirical class-prior ratio deviates from balance ($|r - 1| > 0.1$). All experiments are repeated over 5 random seeds. Implementation details are provided in Appendix \ref{app:hyperparameter_tuning}.

\subsection{Marginal estimators suffer from target mismatch}
\label{subsec:marginal}

\begin{wrapfigure}{r}{0.48\textwidth}
\vspace{-8pt}
\centering
\includegraphics[width=0.46\textwidth]{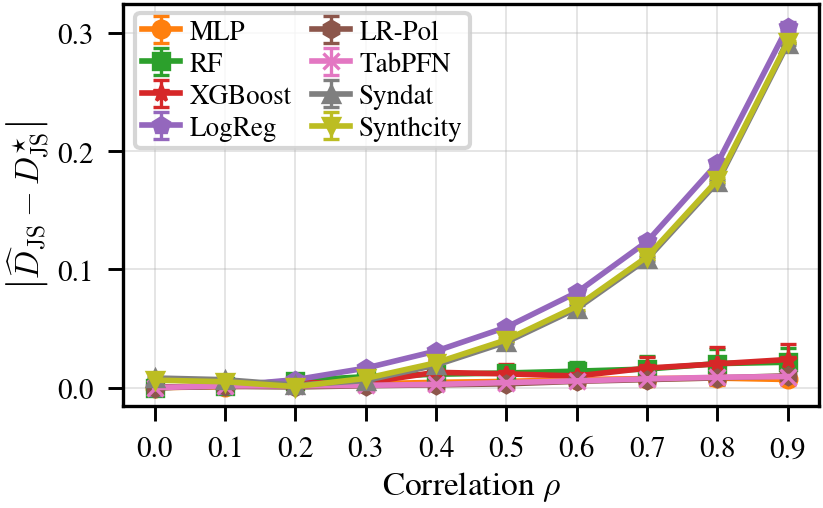}
\caption{Absolute $\DJS$ estimation error versus correlation $\rho$. Marginal estimators show increasing error as joint divergence grows, while most classifier-based estimators remain accurate.}
\label{fig:js_uc5}
\vspace{-8pt}
\end{wrapfigure}

We consider bivariate Gaussian distributions with fixed marginals and varying correlation $\rho \in \{0.0,\dots,0.9\}$ (Figure \ref{fig:marginals}). Since only the joint distribution varies with $\rho$, this setting isolates whether estimators capture dependence structure. Figure \ref{fig:js_uc5} reports absolute error relative to the MC reference estimation. Marginal methods (Syndat, Synthcity) become increasingly inaccurate as $\rho$ grows, confirming that marginal agreement does not imply joint fidelity. Most classifier-based estimators maintain low error across correlations, with MLP, RF, XGBoost, LR-Pol, and TabPFN remaining close to the reference MC estimation. In contrast, linear LogReg deteriorates at large $\rho$, reflecting its limited capacity to model nonlinear dependence shifts. These results show that marginal estimators suffer from target mismatch, while classifier-based estimators can recover joint divergence provided the classifier family is sufficiently expressive. Detailed results are in Appendix \ref{app:marginal_table}.

\subsection{Prior correction is necessary under imbalanced sampling}
\label{subsec:results_imbalance}
We generate pairs of bivariate Gaussian distributions with controlled mean separation. Specifically, we set the distance between the means of $P$ and $Q$ to $\mathrm{gap} \in \{0.3, 0.7, 1.0\}$, so that larger gap values correspond to more easily distinguishable distributions and larger MC reference estimation. For each gap value, we vary the sampling ratio $r=|Q|/|P| \in \{0.1,\dots,1.0\}$, evaluating sensitivity to prior shift and the effectiveness of ratio correction. Additional details and visualizations of the class-imbalance and gap setups are provided in Appendix \ref{app:class_imbalance_setup} (Figures \ref{fig:uc6_imbalance} and \ref{fig:uc6_gap_scenarios}).

\begin{figure}[ht]
\centering
\includegraphics[width=1.0\linewidth]{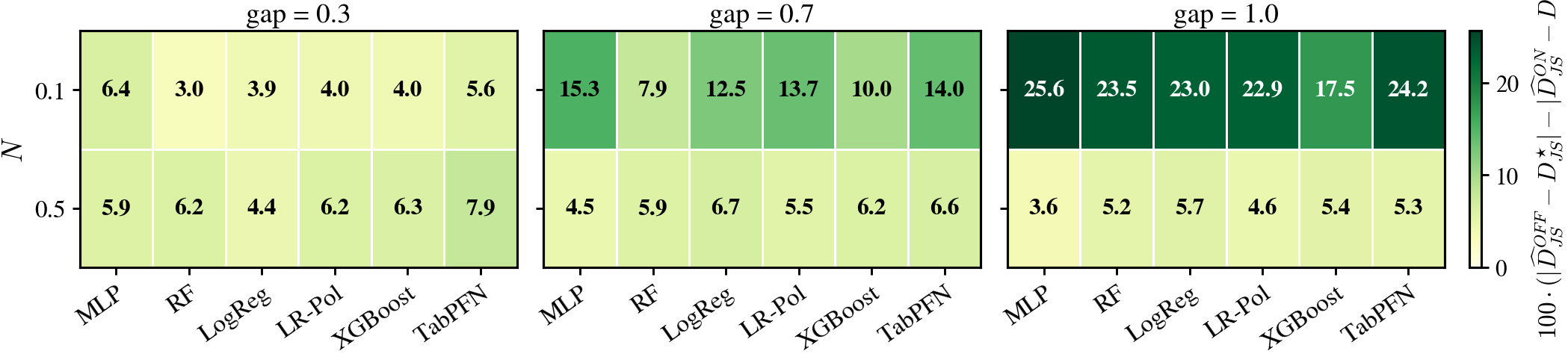}
\caption{Effect of prior correction under class imbalance. Each cell reports the reduction in absolute Jensen--Shannon estimation error from the corrected estimator over the uncorrected one:
$100\cdot(|\widehat{D}_{\mathrm{JS}}^{\mathrm{OFF}}-D_{\mathrm{JS}}^\star|
-
|\widehat{D}_{\mathrm{JS}}^{\mathrm{ON}}-D_{\mathrm{JS}}^\star|)$.
Positive values indicate improvement. Results are shown for different gaps, imbalance ratios $N=|Q|/|P|$, and classifiers.}
\label{fig:uc13_imbalance_improvement}
\end{figure}

Figure \ref{fig:uc13_imbalance_improvement} reports the gain in absolute estimation error from prior correction across classifiers, imbalance ratios $N \in \{0.1,0.5\}$, and distributional gaps. Correction yields the largest gains in the most challenging regime ($N=0.1$), especially at large gap values, where uncorrected classifiers are most dominated by class priors and divergence estimates are least informative. As $N \to 1$, the prior shift becomes negligible and the corrected and uncorrected estimates converge. Overall, these results validate Section \ref{subsec:prior_correction}: under strong prior shift, classifier-based $\DJS$ estimates become prior-dominated, while the proposed correction removes this posterior bias under calibrated posteriors and recovers meaningful estimates. The correction is specific to classifier-based estimators; for marginal estimators, it has no effect because their estimates are computed from feature-wise divergences rather than classifier posteriors. Full numerical results are reported in Table \ref{tab:uc13_on_abs_error_x100_all_gaps_mean_std} in Appendix \ref{app:results_imbalance}.

\subsection{High-dimensional estimation exposes accuracy-cost trade-offs}
\label{subsec:results_highdim}

We analyze the effect of increasing dimensionality using multivariate Gaussian distributions with $d \in \{2,10,25,40,50\}$. Sample sizes are fixed at $M=L=2{,}000$ across dimensions, so larger $d$ also makes the problem increasingly sample-limited relative to the feature dimension. Figure \ref{fig:uc7} reports estimation accuracy and runtime as dimensionality increases. The left panel shows that classifier-based estimators generally track the MC reference estimation $D_{\mathrm{JS}}^{\star}$, whereas marginal baselines (Syndat, Synthcity) fail to capture the growth in joint divergence. Their estimates remain nearly constant across dimensions, causing errors to increase with $D_{\mathrm{JS}}^{\star}$. At $d=50$, the best classifier-based estimators remain substantially more accurate than the marginal baselines: MLP and TabPFN achieve the lowest errors, while RF exhibits the largest bias among classifier-based methods. Syndat and Synthcity reach errors roughly five times larger. The right panel summarizes the runtime-accuracy trade-off at $d=50$. LogReg and TabPFN offer the best accuracy-efficiency balance, achieving low error at low computational cost. MLP and XGBoost are more expensive, while LR-Pol is the most computationally demanding. This is expected because polynomial feature expansion scales combinatorially with dimensionality and degree, $\mathcal{O}(d^k)$, leading to runtimes above 29 minutes at $d=50$. Thus, LR-Pol becomes impractical beyond moderate dimensionality, with performance constrained by computational budget rather than statistical capacity. Full results are in Appendix \ref{app:results_dimension}.

\begin{figure}[ht]
\centering
\includegraphics[width=0.9\linewidth]{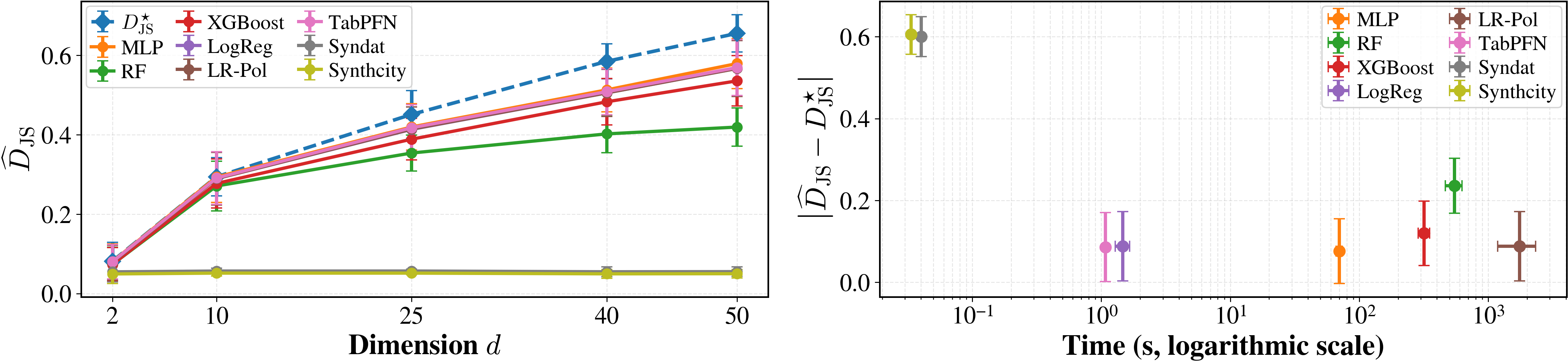}
\caption{$\DJS$ estimation across dimensions and runtime--accuracy trade-off. 
Left: $\widehat{D}_{\mathrm{JS}}$ versus dimensionality, compared with the MC reference estimation $D_{\mathrm{JS}}^{\star}$; error bars show one standard deviation across seeds. 
Right: runtime versus MAE at $d=50$, with time on a logarithmic scale.}
\label{fig:uc7}
\end{figure}

\subsection{Real-world synthetic benchmarks reveal protocol-dependent conclusions} 
\label{subsec:real}  

To complement controlled experiments, we evaluate divergence estimation in realistic finite-sample settings without MC reference estimation. We use two tabular benchmarks, \textit{Adult} \cite{becker1996adult} and \textit{Intrusion} \cite{Stolfo1999intrusion}, with synthetic data from VAE \cite{kingma2013vae} and CTGAN \cite{xu2019ctgan}, following \cite{apellaniz2025artificial}. We compare estimates across classifier families, generators, and two sample budgets with fixed train/validation/test proportions: high ($N=10{,}000$, $M=7{,}500$, $L=1{,}000$) and low ($N=1{,}000$, $M=750$, $L=100$). 

\begin{figure}[ht]
\centering
\includegraphics[width=1.0\linewidth]{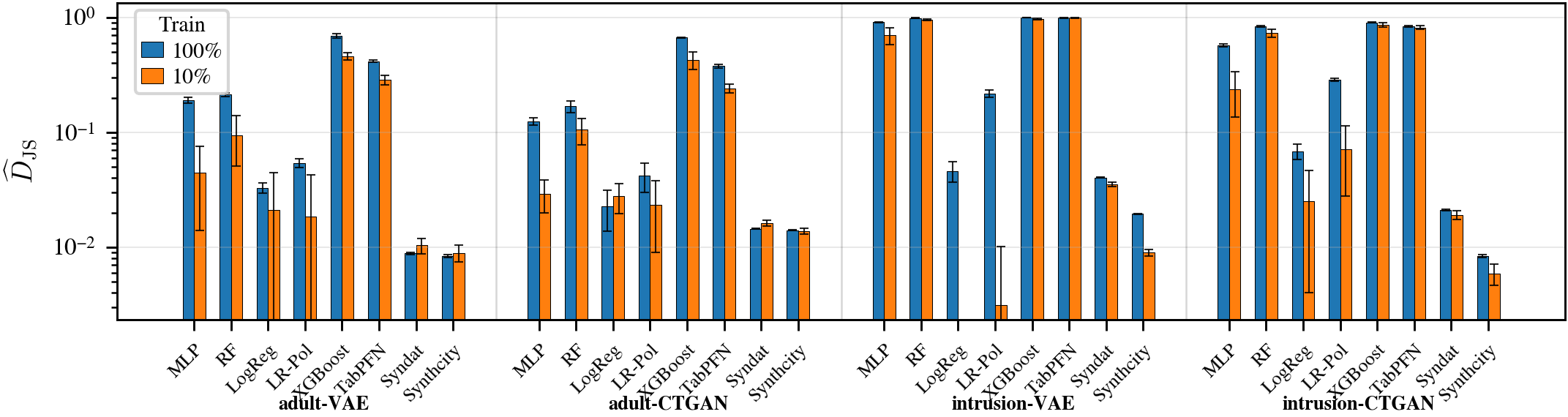}
\caption{Protocol-dependent $\DJS$ estimates on Adult and Intrusion under VAE and CTGAN, in high- and low-sample regimes. Error bars show one standard deviation; log scale highlights order-of-magnitude differences.}
\label{fig:real_case} 
\end{figure}

Figure \ref{fig:real_case} shows that $\DJS$ estimates are strongly protocol-dependent: across dataset--generator pairs, estimators can differ by orders of magnitude. For instance, on \textit{Intrusion} with VAE, LogReg reports much smaller divergence than high-capacity estimators such as XGBoost and TabPFN, suggesting that low-capacity models underestimate nonlinear multivariate discrepancies. Marginal methods (Syndat, Synthcity) also report lower values, consistent with their insensitivity to joint-distribution discrepancies in controlled experiments. Comparing high- and low-sample regimes reveals robustness to reduced data: some methods preserve similar estimates, whereas others show larger shifts or variance. Overall, estimator choice dominates variability in real-world $\DJS$ evaluation; reported divergence values are incomplete without the estimator specification, protocol, calibration, and data regime. Full numerical results are reported in Appendix \ref{app:results_real}.

\section{Conclusions}
\label{sec:conclusions}
Empirical $\DJS$ estimation is protocol-dependent: classifier choice can shape reported values, making an empirical divergence estimate difficult to interpret without full specification of the estimation pipeline. Through controlled experiments, we identified three key failure regimes (dependence structure, class imbalance, and high dimensionality) and showed that no single estimator dominates across all settings. Marginal-based approaches underestimate divergence by ignoring joint dependencies, while 
classifier-based estimators vary significantly across model families. We also provide prior correction and practical estimator guidelines.

\subsection{Guide to practitioners}
Table \ref{tab:guidelines} summarizes estimator recommendations across regimes. MLP and TabPFN offer the best overall accuracy but differ markedly in cost; LogReg is fastest but cannot detect nonlinear structure; LR-Pol is expressive but becomes computationally prohibitive at high dimensionality. Marginal-based approaches (Syndat, Synthcity) should be avoided whenever joint dependencies matter, which is the norm in structured tabular data.

\begin{table}[ht]
\centering
\small
\caption{Guidelines for classifier-based $\DJS$ estimation across data regimes. \textbf{\cmark}: recommended; \textbf{\tmark}: acceptable with caveats; \textbf{\xmark}: not recommended. For \emph{Computational cost}, \textbf{\cmark} denotes low cost.}
\label{tab:guidelines}
\setlength{\tabcolsep}{5pt}
\begin{tabular}{lcccccccc}
\toprule
\textbf{Setting} & \textbf{MLP} & \textbf{LogReg} & \textbf{LR-Pol} 
& \textbf{RF} & \textbf{XGBoost} & \textbf{TabPFN} 
& \textbf{Syndat} & \textbf{Synthcity} \\
\midrule
Dependence structure  
& \cmark & \xmark & \cmark & \cmark & \cmark & \cmark & \xmark & \xmark \\
High dimensionality   
& \cmark & \tmark & \xmark & \tmark & \cmark & \tmark & \xmark & \xmark \\
Class imbalance       
& \cmark & \tmark & \cmark & \tmark & \tmark 
& \tmark & \tmark & \tmark \\
Finite samples ($d$ small)       
& \tmark & \cmark & \tmark & \tmark & \tmark & \cmark & \tmark & \tmark \\
Computational cost    
& \xmark & \cmark & \xmark & \tmark & \tmark 
& \cmark & \cmark & \cmark \\
\bottomrule
\end{tabular}
\end{table}

\subsection{Limitations and future work}
\label{subsec:limitations}

Our controlled experiments rely on Gaussian distributions, which may not reflect the complexity of real tabular data (mixed types, heavy tails, missing values). The analysis is primarily empirical: we provide no theoretical guarantees on consistency or convergence rates, and conclusions are limited to the classifier families and regimes considered. The imbalance correction assumes the ratio $r$ is known, which may introduce additional error in practice.

Several directions remain open. On the theoretical side, bias-variance characterizations of classifier-based $\DJS$ estimators across data regimes would complement our empirical findings. On the practical side, incorporating additional model families (kernel methods, modern tabular transformers, calibrated ensembles) into the pipeline is straightforward given its classifier-agnostic design. Extending the analysis to alternative divergences (Wasserstein, MMD) would clarify whether estimator sensitivity is specific to $\DJS$ or reflects a 
broader phenomenon. Finally, applying this framework beyond tabular data, to images or time series, would test the generality of the observed failure modes.

\begin{ack}   
SYNTHIA (Synthetic Data Generation framework for integrated validation of use cases and AI healthcare applications) is supported by the Innovative Health Initiative Joint Undertaking (IHI JU) under grant agreement No 101172872. The JU receives support from the European Union's Horizon Europe research and innovation programme, COCIR, EFPIA, Europa Bío, MedTech Europe, Vaccines Europe and DNV. The UK consortium partner, The National Institute for Health and Care Excellence (NICE) is supported by UKRI Grant 10132181. Funded by the European Union, the private members, and those contributing partners of the IHI JU. Views and opinions expressed are however those of the author(s) only and do not necessarily reflect those of the aforementioned parties. Neither of the aforementioned parties can be held responsible for them.

\end{ack}

{
\small
\bibliographystyle{unsrtnat}
\bibliography{references}
}

\appendix

\section{Technical appendices and supplementary material}

\subsection{Jensen-Shannon divergence estimation}
\label{app:Density}

Estimating the divergence between two probability distributions $P$ and $Q$ can be reduced to estimating their density ratio. Let $p(x)$ and $q(x)$ denote the corresponding densities. The density ratio is defined as
\begin{equation}
r^*(x) = \frac{p(x)}{q(x)}.
\end{equation}

Direct estimation of $p(x)$ and $q(x)$ is often difficult in high-dimensional settings. Density ratio estimation provides an alternative by learning $r^*(x)$ directly from samples without requiring explicit density estimation \cite{sugiyama2012density}.

\subsubsection{Jensen-Shannon divergence}

Let $P$ and $Q$ be probability distributions on $\mathcal{X} \subseteq \mathbb{R}^d$, with densities $p(x)$ and $q(x)$ (when they exist). Define the mixture distribution
\begin{equation}
M = \frac{1}{2}(P + Q), 
\quad
m(x) = \frac{1}{2}\big(p(x) + q(x)\big).
\end{equation}

The $\DJS$ is defined as
\begin{equation}
\mathrm{D_{JS}}(P \parallel Q) = \frac{1}{2} \mathrm{D_{KL}}(P \parallel M) + \frac{1}{2} \mathrm{D_{KL}}(Q \parallel M).
\end{equation}

Equivalently,
\begin{equation}
\mathrm{D_{JS}}(P \parallel Q)
=
\frac{1}{2} \mathbb{E}_{x \sim P} \left[ \log \frac{p(x)}{m(x)} \right]
+
\frac{1}{2} \mathbb{E}_{x \sim Q} \left[ \log \frac{q(x)}{m(x)} \right].
\end{equation}

\subsubsection{Limitations of marginal Jensen-Shannon divergence}
\label{app:limitations_marginals}

\textbf{Proof.}
Let \(X_1,X_2 \in \{0,1\}\). Define \(P\) by
\(P(0,0)=P(1,1)=1/2\), and \(Q\) by
\(Q(0,1)=Q(1,0)=1/2\). 
Both distributions have identical one-dimensional marginals:
\(P(X_i=0)=Q(X_i=0)=1/2\) and 
\(P(X_i=1)=Q(X_i=1)=1/2\) for \(i=1,2\). 
Therefore,
\[
D^{marg}_{JS}(P\|Q)
=
\frac{1}{2}
\sum_{i=1}^{2}
D_{JS}(P(X_i)\|Q(X_i))
=0.
\]
However, the joint supports of \(P\) and \(Q\) are disjoint. Hence the joint distributions are different, and with logarithms in base 2,
\[
D^{joint}_{JS}(P\|Q)=1.
\]
This shows that marginal Jensen-Shannon divergence can be zero even when the joint Jensen-Shannon divergence is maximal.

\subsubsection{Classifier-based estimation of Jensen-Shannon divergence}
\label{app:classification_js}

We briefly review the connection between probabilistic classification, density ratio estimation, and $\DJS$.

\paragraph{From classification to density ratios.}
Consider a binary classification problem where samples from $P$ are labeled $y=1$ and samples from $Q$ are labeled $y=0$. Let $D(x) = \mathbb{P}(y=1 \mid x)$ denote the posterior probability estimated by a classifier.

By Bayes' rule, the density ratio can be written as
\begin{equation}
r^*(x) = \frac{p(x)}{q(x)} 
= \frac{P(y=0)}{P(y=1)} \cdot \frac{D(x)}{1 - D(x)}.
\end{equation}

Under balanced class priors, this simplifies to
\begin{equation}
r^*(x) = \frac{D(x)}{1 - D(x)}.
\end{equation}

This establishes a direct link between probabilistic classification and density ratio estimation, as shown by \citet{apellaniz2024divergence}.

\paragraph{From density ratios to Jensen-Shannon divergence.}
The $\DJS$ can be expressed as
\begin{equation}
\mathrm{D_{JS}}(P \parallel Q)
=
\frac{1}{2} \mathbb{E}_{x \sim P}
\left[ \log \frac{2p(x)}{p(x) + q(x)} \right]
+
\frac{1}{2} \mathbb{E}_{x \sim Q}
\left[ \log \frac{2q(x)}{p(x) + q(x)} \right].
\end{equation}

Using the Bayes-optimal classifier
\begin{equation}
D^*(x) = \frac{p(x)}{p(x) + q(x)},
\end{equation}

we obtain the form
\begin{equation}
\mathrm{D_{JS}}(P \parallel Q)
=
\frac{1}{2} \mathbb{E}_{x \sim P}
\left[ \log \big(2 D^*(x)\big) \right]
+
\frac{1}{2} \mathbb{E}_{x \sim Q}
\left[ \log \big(2 (1 - D^*(x))\big) \right].
\end{equation}

In practice, $D^*(x)$ is approximated by a classifier $\hat{D}(x)$, yielding the estimator used in the main text.

\subsubsection{Monte Carlo estimation}
\label{app:mc}
In controlled settings where sampling from both distributions is possible, the $\DJS$ can be approximated using Monte Carlo integration. Given samples $\{x_i\}_{i=1}^L \sim p(x)$ and $\{\tilde{x}_i\}_{i=1}^L \sim q(x)$, we estimate

\begin{equation}
\mathrm{D}^{\star}_{JS}(P \,\|\, Q)
\approx
\frac{1}{2L} \sum_{i=1}^{L}
\log \frac{p(x_i)}{m(x_i)}
+
\frac{1}{2L} \sum_{i=1}^{L}
\log \frac{q(\tilde{x}_i)}{m(\tilde{x}_i)},
\end{equation}

where $m(x) = \frac{1}{2}(p(x) + q(x))$.

Equivalently, this can be expressed as
\begin{equation}
\mathrm{D}^{\star}_{JS}(P \,\|\, Q)
\approx
\frac{1}{2L} \sum_{i=1}^{L}
\log \left(
\frac{2p(x_i)}{p(x_i) + q(x_i)}
\right)
+
\frac{1}{2L} \sum_{i=1}^{L}
\log \left(
\frac{2q(\tilde{x}_i)}{p(\tilde{x}_i) + q(\tilde{x}_i)}
\right).
\end{equation}

This estimate serves as a reference approximation of the true divergence in synthetic experiments, allowing us to evaluate the accuracy of classifier-based estimators. This formulation enables estimating the $\DJS$ without explicit density estimation, relying solely on the discriminative power of the classifier.

\subsection{Hyperparameter search spaces}
\label{app:hyperparameter_tuning}

Since divergence estimation relies on calibrated probabilistic outputs \cite{guo2017calibration}, classifier hyperparameters are selected using Bayesian optimization with negative log-loss as the validation objective. For classifiers undergoing Bayesian hyperparameter search (RF, XGBoost, LogReg, and LR-Pol), posterior probabilities are post-hoc calibrated using
\texttt{CalibratedClassifierCV}. We use isotonic calibration when the
validation/calibration set contains more than 1000 samples and sigmoid
calibration otherwise. MLP and TabPFN are evaluated using their native
probabilistic outputs. 

We use \texttt{BayesSearchCV} \cite{head2021scikit} with 50 iterations and 5-fold cross-validation, repeating each experiment over 5 random seeds. Search spaces are summarized in Table \ref{tab:hyperparams}. In class-imbalance experiments, classifier-specific reweighting is disabled to isolate the explicit ratio correction: \texttt{class\_weight=None} for RF, LogReg, and LR-Pol, and \texttt{scale\_pos\_weight=1.0} for XGBoost. MLP and TabPFN are not tuned via Bayesian search: MLP uses Adam with early stopping and dropout, while TabPFN follows its standard inference protocol.

\begin{table*}[t]
\centering
\small
\caption{Hyperparameter search spaces for the considered classifiers. Bayesian optimization is used for RF, XGBoost, LogReg, and LR-Pol. MLP and TabPFN are trained without Bayesian hyperparameter search.}
\label{tab:hyperparams}
\begin{tabular}{lll}
\toprule
\textbf{Classifier} & \textbf{Hyperparameter} & \textbf{Search space} \\
\midrule
\multirow{7}{*}{Random Forest} 
& \texttt{n\_estimators} & $[100, 500]$ \\
& \texttt{max\_depth} & $[3, 10]$ \\
& \texttt{min\_samples\_leaf} & $[10, 100]$ \\
& \texttt{min\_samples\_split} & $[2, 20]$ \\
& \texttt{max\_features} & \{\texttt{sqrt}, \texttt{log2}\} \\
& \texttt{max\_samples} & $[0.5, 1.0]$ \\
& \texttt{class\_weight} & \{\texttt{balanced}, \texttt{balanced\_subsample}, \texttt{None}\} \\
\midrule
\multirow{8}{*}{XGBoost}
& \texttt{n\_estimators} & $[100, 600]$ \\
& \texttt{max\_depth} & $[3, 10]$ \\
& \texttt{learning\_rate} & $[10^{-3}, 3 \times 10^{-1}]$ (log-uniform) \\
& \texttt{subsample} & $[0.5, 1.0]$ \\
& \texttt{colsample\_bytree} & $[0.5, 1.0]$ \\
& \texttt{min\_child\_weight} & $[1, 20]$ \\
& \texttt{reg\_alpha} & $[10^{-8}, 10]$ (log-uniform) \\
& \texttt{reg\_lambda} & $[10^{-8}, 10]$ (log-uniform) \\
& \texttt{scale\_pos\_weight} & $[0.5, 2.5]$ \\
\midrule
\multirow{3}{*}{Logistic Regression}
& \texttt{C} & $[10^{-4}, 10^{2}]$ (log-uniform) \\
& \texttt{penalty} & \{\texttt{l1}, \texttt{l2}\} \\
& \texttt{class\_weight} & \{\texttt{None}, \texttt{balanced}\} \\
\midrule
\multirow{5}{*}{Polynomial Logistic Regression}
& \texttt{poly\_\_degree} & $[1, 3]$ \\
& \texttt{poly\_\_interaction\_only} & \{\texttt{False}, \texttt{True}\} \\
& \texttt{logreg\_\_C} & $[10^{-4}, 10^{2}]$ (log-uniform) \\
& \texttt{logreg\_\_penalty} & \{\texttt{l1}, \texttt{l2}\} \\
& \texttt{logreg\_\_class\_weight} & \{\texttt{None}, \texttt{balanced}\} \\
\midrule
MLP & -- & Trained directly with Adam and early stopping \\
TabPFN & -- & Direct fit, no Bayesian hyperparameter search \\
\bottomrule
\end{tabular}
\end{table*}

For the MLP, we use a three-layer fully connected architecture with 256, 64, and 32 hidden units and LeakyReLU activations. Dropout and early stopping are applied to mitigate overfitting. For LR-Pol, we account for the combinatorial growth of polynomial features with dimensionality and degree (i.e., $\mathcal{O}(d^k)$). To ensure computational tractability, candidate degrees are dynamically restricted based on the resulting feature dimensionality, discarding configurations that exceed a predefined threshold. In practice, this typically limits the search to low-degree polynomials (often degree $\leq 2$ in high-dimensional settings). When no candidate satisfies the constraint, the model defaults to degree 1. This design ensures a fair comparison across estimators under a fixed computational budget while preventing feature explosion.

\subsection{Computational resources}
\label{app:compute_resources}

All experiments were conducted on a workstation equipped with an AMD Threadripper Pro 5975WX CPU (32 cores / 64 threads), an NVIDIA RTX 4090 GPU, and 128 GB of RAM. Classical machine learning models (RF, XGBoost, LogReg, and LR-Pol), including Bayesian hyperparameter optimization, were executed on CPU using multi-core parallelism when supported by the corresponding libraries. GPU acceleration was used only for the MLP and TabPFN models.

\subsection{Numerical stability in classifier-based estimation}
\label{app:numerical-stability}

The estimator in Equation \ref{eq:js_e} involves logarithmic and ratio terms that are numerically sensitive when classifier outputs approach the boundaries of $[0,1]$. This appendix formalizes the sources of instability and describes the adjustments applied in all experiments.

\subsubsection{Sources of instability}

The estimator requires evaluating $\log \hat{D}(x)$ and $\log(1 - \hat{D}(x))$, as well as the ratio 
$\hat{D}(x)/(1-\hat{D}(x))$ used in the imbalance correction. These expressions become undefined when 
$\hat{D}(x) \to 0$ or $\hat{D}(x) \to 1$, which occurs in two distinct scenarios.

\paragraph{Disjoint or near-disjoint supports.}
Let $D^*(x) = p(x)/(p(x)+q(x))$ be the Bayes-optimal discriminator. If $x \in \mathrm{supp}(P)$ but $x \notin \mathrm{supp}(Q)$, then $q(x) = 0$ and

\[
D^*(x) = \frac{p(x)}{p(x) + 0} = 1,
\]
making $1 - D^*(x) = 0$ and the ratio $D^*/(1-D^*)$ undefined. Symmetrically, if $x \in \mathrm{supp}(Q)$ but $x \notin \mathrm{supp}(P)$, then $D^*(x) = 0$ and $\log D^*(x) \to -\infty$.

\paragraph{Finite-sample saturation in high-capacity models.}
Even when $\mathrm{supp}(P) \cap \mathrm{supp}(Q) \neq \emptyset$, high-capacity classifiers (e.g., MLPs, random forests) can overfit the decision boundary on finite samples. When the model finds a separating hyperplane that perfectly classifies training data, sigmoid activations saturate, driving $\hat{D}(x)$ to values indistinguishable from $0$ or $1$ under IEEE 754 \cite{ieee754} floating-point arithmetic. 
Specifically, if $q(x) < 10^{-15}$ relative to $p(x)$, floating-point rounding yields $\hat{D}(x) = 1.0$ exactly, making the subtraction $1 - \hat{D}(x) = 0$.

\subsubsection{Probability clipping}

To prevent undefined operations in all downstream computations, we apply a clipping transformation to all classifier outputs before evaluation:
\[
\tilde{D}(x) = \min\{1-\varepsilon,\, \max\{\varepsilon,\, 
\hat{D}(x)\}\}, \quad \varepsilon = 10^{-6}.
\]
This prevents the following failure modes:
\begin{enumerate}
    \item \textbf{Logarithm underflow:} $\hat{D}(x) \to 0$ causes $\log \hat{D}(x) \to -\infty$.
    \item \textbf{Division by zero:} $\hat{D}(x) \to 1$ causes $1 - \hat{D}(x) \to 0$, making the ratio $\hat{D}/(1-\hat{D})$ undefined.
\end{enumerate}
The value $\varepsilon = 10^{-6}$ is chosen small enough to affect only boundary values without materially altering the estimator on well-separated interior points.

\subsubsection{Negative divergence estimates}

Due to the nature of the estimator, finite-sample effects and limited model capacity may yield $\widehat{\mathrm{D}}_{JS}(P \| Q) < 0$. This does not contradict the theoretical non-negativity of $\DJS$, but reflects estimator variance when the classifier fails to distinguish between the two distributions reliably. In practice, negative estimates indicate that the classifier performs at or below chance level, and should be interpreted as $\widehat{\mathrm{D}}_{JS} \approx 0$ rather than as a meaningful divergence value.


\subsection{Additional experiments and analysis}
\label{app:additional_experiments}

These experiments extend the analysis to settings involving model misspecification and generative approximation. We complement the main experiments with additional controlled settings to analyze estimator behavior across different distributional families and data-generation models. In all cases, we consider scenarios in which the underlying distributions are known, allowing us to isolate the effects of sample size and model mismatch on divergence estimation.

Unless otherwise stated, divergence is estimated using the classifier-based pipeline described in Section \ref{sec:methodology}, and results are averaged across multiple random seeds to assess variability.

\subsubsection{Finite samples}
We evaluate the effect of finite sample size on divergence estimation. The reference distribution $P$ is a correlated bimodal Gaussian mixture in $d=2$, with two components, weights $[0.7,0.3]$, means $[(1,1),(-1,-1)]$, and within-component correlation $\rho=0.2$. We vary the dataset size $N \in \{10,\dots,100\}$ while fixing the training and evaluation budgets to the largest available $M$ and $L$, so that changes in error reflect the effect of finite-sample availability rather than changes in the evaluation protocol.
 
\begin{table}[ht]
\centering
\scriptsize
\setlength{\tabcolsep}{2.2pt}
\caption{GMM approximation experiment: absolute estimation error with respect to the Monte Carlo reference, $100\cdot|\widehat{D}_{\mathrm{JS}} - D_{\mathrm{JS}}^{\star}|$, for $M = L = 2000$. Values are reported as mean $\pm$ std over seeds.}
\label{tab:uc3_abs_error_vs_gt_mean_std_x100}
\resizebox{\textwidth}{!}{%
\begin{tabular}{rcccccccc}
\toprule
N & MLP & RF & XGBoost & LogReg & LR-Pol & TabPFN & Syndat & Synthcity \\
\midrule
10 & 6.66 $\pm$ 12.15 & 11.10 $\pm$ 14.26 & 18.17 $\pm$ 5.68 & 16.02 $\pm$ 7.75 & 23.56 $\pm$ 1.36 & 5.37 $\pm$ 13.15 & 16.59 $\pm$ 4.26 & 17.35 $\pm$ 3.97 \\
20 & 1.13 $\pm$ 8.38 & 8.80 $\pm$ 3.42 & 9.70 $\pm$ 3.72 & 9.43 $\pm$ 3.88 & 9.60 $\pm$ 3.63 & 7.44 $\pm$ 8.37 & 8.84 $\pm$ 2.55 & 9.08 $\pm$ 2.36 \\
30 & 0.01 $\pm$ 0.91 & 4.22 $\pm$ 1.74 & 4.31 $\pm$ 0.80 & 4.75 $\pm$ 1.50 & 4.48 $\pm$ 0.78 & 1.99 $\pm$ 1.00 & 3.52 $\pm$ 0.64 & 3.73 $\pm$ 0.53 \\
40 & 2.93 $\pm$ 3.70 & 6.73 $\pm$ 4.11 & 6.34 $\pm$ 2.07 & 6.77 $\pm$ 2.02 & 6.76 $\pm$ 1.35 & 4.59 $\pm$ 2.85 & 5.34 $\pm$ 1.15 & 5.52 $\pm$ 0.97 \\
50 & 1.35 $\pm$ 2.95 & 3.06 $\pm$ 3.20 & 2.97 $\pm$ 1.93 & 3.22 $\pm$ 2.96 & 3.10 $\pm$ 2.81 & 1.67 $\pm$ 4.32 & 2.35 $\pm$ 1.03 & 2.53 $\pm$ 0.96 \\
60 & 2.06 $\pm$ 2.27 & 4.79 $\pm$ 1.55 & 4.27 $\pm$ 1.15 & 4.66 $\pm$ 1.41 & 4.48 $\pm$ 0.94 & 1.68 $\pm$ 2.70 & 3.79 $\pm$ 0.69 & 4.06 $\pm$ 0.65 \\
70 & 0.56 $\pm$ 1.61 & 2.18 $\pm$ 1.19 & 2.37 $\pm$ 0.94 & 2.16 $\pm$ 0.84 & 2.45 $\pm$ 0.83 & 1.13 $\pm$ 4.09 & 1.35 $\pm$ 0.60 & 1.51 $\pm$ 0.56 \\
80 & 0.41 $\pm$ 2.15 & 2.64 $\pm$ 1.88 & 2.82 $\pm$ 1.11 & 2.65 $\pm$ 1.29 & 2.81 $\pm$ 1.40 & 1.43 $\pm$ 2.57 & 1.67 $\pm$ 0.50 & 1.85 $\pm$ 0.47 \\
90 & 0.06 $\pm$ 2.37 & 1.94 $\pm$ 1.46 & 2.04 $\pm$ 1.23 & 2.00 $\pm$ 1.14 & 2.11 $\pm$ 0.93 & 0.69 $\pm$ 3.14 & 1.29 $\pm$ 0.54 & 1.47 $\pm$ 0.54 \\
100 & 1.24 $\pm$ 1.10 & 5.02 $\pm$ 0.08 & 1.23 $\pm$ 0.56 & 1.71 $\pm$ 0.00 & 0.42 $\pm$ 0.01 & 12.58 $\pm$ 2.66 & 2.03 $\pm$ 0.34 & 1.61 $\pm$ 0.32 \\
\midrule
Time (s) & 168.36 & 174.11 & 5.25 & 31.87 & 32.56 & 0.49 & 0.00 & 0.00 \\
\bottomrule
\end{tabular}
}
\end{table}

The synthetic distribution $Q$ is obtained by fitting a Gaussian mixture model to samples drawn from $P$, and subsequently sampling from the fitted model. Unlike previous experiments, $Q$ is not fixed analytically but depends on the amount of data used for model fitting.

We vary the number of samples used to fit the Gaussian Mixture Model (GMM), $N \in \{10, \dots, 100\}$, while keeping the estimation sample sizes fixed to $M=L=2000$. This setup allows us to study how improvements in generative model quality translate into changes in the estimated divergence.

\subsubsection{Marginal-based vs joint distribution.}
\label{app:marginal_table}
Table \ref{tab:uc5_abs_error_no_gt_mean_std_x100_2dec} confirms this quantitatively: at $\rho=0.9$, both baselines incur errors of approximately $0.29$, comparable to the MC reference estimation itself and thus effectively uninformative. Among classifier-based methods, MLP, LR-Pol, RF, and TabPFN achieve errors below $0.01$ across all $\rho$ values. LogReg is a notable exception: its linear decision boundary is insufficient to capture nonlinear dependencies, leading to errors comparable to those of marginal methods at high $\rho$. This highlights that estimator capacity is critical; not all classifiers are equally suited for detecting joint-distribution discrepancies.

\begin{table}[ht]
\centering
\scriptsize
\setlength{\tabcolsep}{2.5pt}
\caption{Absolute $\DJS$ estimation error reported as mean $\pm$ std over 5 seeds. All divergence values are multiplied by 100 and shown with two decimals. Reported runtimes correspond to a single representative run and include the full training pipeline (model fitting and hyperparameter search when enabled).}
\label{tab:uc5_abs_error_no_gt_mean_std_x100_2dec}
\begin{tabular}{ccccccccc}
\toprule
$\rho$ & MLP & RF & LogReg & XGBoost & LR-Pol & TabPFN & Syndat & Synthcity \\
\midrule
0.0 & 0.00 $\pm$ 0.00 & 0.00 $\pm$ 0.01 & 0.00 $\pm$ 0.00 & 0.03 $\pm$ 0.04 & 0.00 $\pm$ 0.00 & 0.01 $\pm$ 0.01 & 0.82 $\pm$ 0.07 & 0.63 $\pm$ 0.09 \\
0.5 & 0.36 $\pm$ 0.61 & 0.66 $\pm$ 0.54 & 4.54 $\pm$ 0.19 & 0.59 $\pm$ 0.77 & 0.36 $\pm$ 0.39 & 0.56 $\pm$ 0.49 & 3.86 $\pm$ 0.09 & 4.05 $\pm$ 0.11 \\
0.7 & 0.40 $\pm$ 0.87 & 0.65 $\pm$ 0.97 & 11.44 $\pm$ 0.29 & 0.73 $\pm$ 0.81 & 0.86 $\pm$ 0.75 & 0.76 $\pm$ 0.79 & 10.89 $\pm$ 0.17 & 11.08 $\pm$ 0.15 \\
0.9 & 0.50 $\pm$ 0.55 & 1.11 $\pm$ 1.04 & 29.49 $\pm$ 0.36 & 1.34 $\pm$ 1.19 & 0.86 $\pm$ 1.10 & 0.73 $\pm$ 1.07 & 29.10 $\pm$ 0.12 & 29.27 $\pm$ 0.11 \\
\midrule
Time (s) &  94.02 & 181.76  & 0.31 & 73.46 & 31.73  & 0.45 & 0.00  & 0.00  \\
\bottomrule
\end{tabular}
\end{table}

\subsubsection{Class imbalance setup}  
\label{app:class_imbalance_setup}
To study the effect of prior shift, we construct datasets with controlled imbalance between samples from $P$ and $Q$. Figure \ref{fig:uc6_imbalance} illustrates the experimental setup. The number of samples from $P$ is kept fixed, while the number of samples from $Q$ is varied to achieve different imbalance ratios $N = |Q|/|P|$. This allows us to isolate the effect of class imbalance on the behavior of classifier-based $\DJS$ estimators.

\begin{figure}[ht]
\centering
\includegraphics[width=0.9\linewidth]{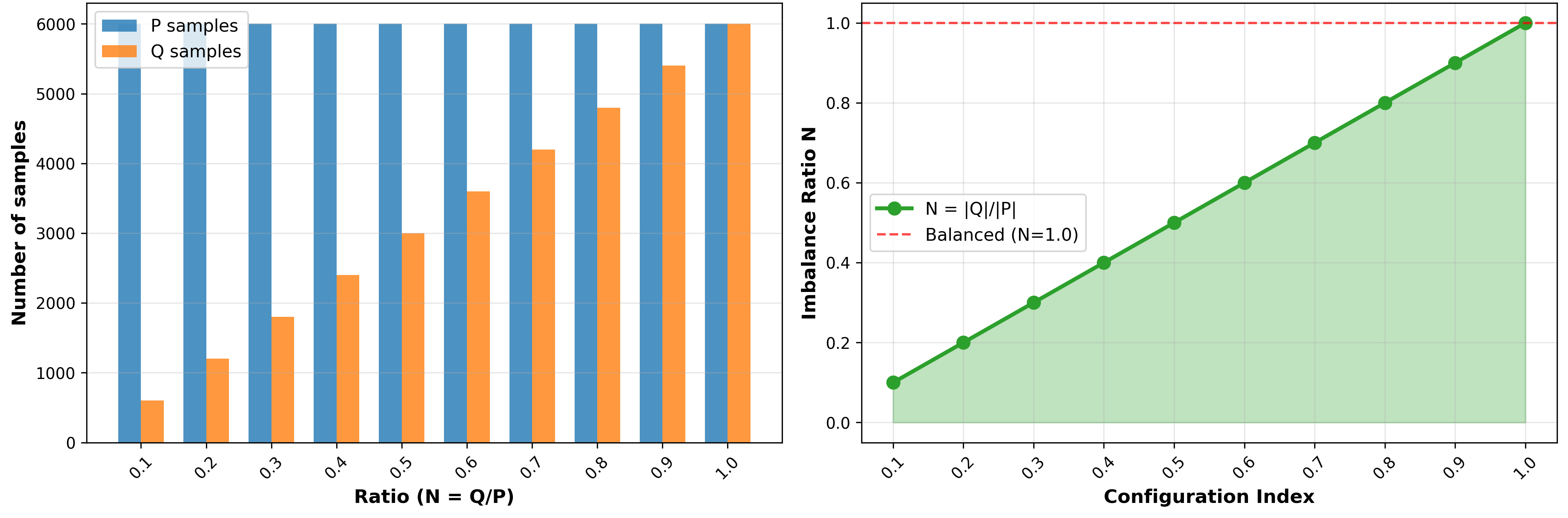}
\caption{Class imbalance setup with varying ratios $N = |Q|/|P|$, where $N=1$ denotes the balanced case.}
\label{fig:uc6_imbalance}
\end{figure}

Specifically, we set $\mu_P=(0,0)$ and shift $Q$ along the direction $(1,-1)$ as
$
\mu_Q = \mu_P + \mathrm{gap}\,(1,-1),
$
with $\mathrm{gap}\in\{0.3,0.7,1.0\}$. Figure \ref{fig:uc6_gap_scenarios} visualizes the resulting distribution gap scenarios.

\begin{figure}[ht]
\centering
\includegraphics[width=0.9\linewidth]{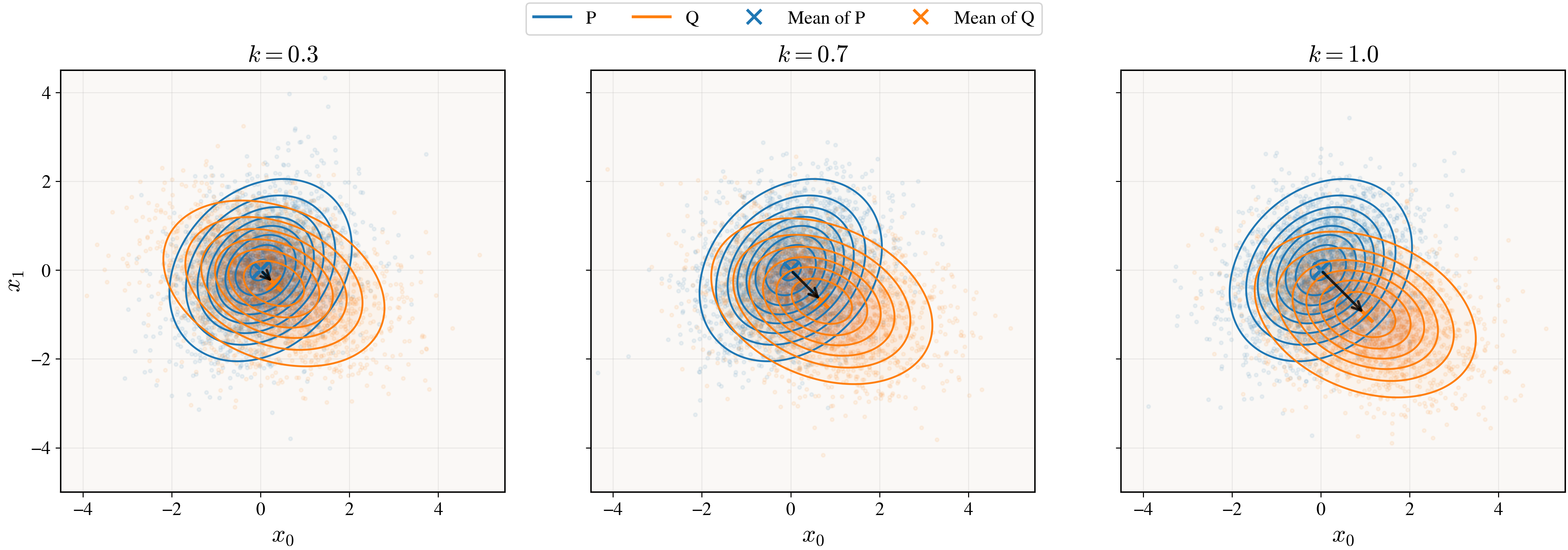}
\caption{Gaussian gap scenarios used in the class imbalance experiment. The distribution $P$ is kept fixed with mean $\mu_P=(0,0)$, while $Q$ is shifted along the direction $(1,-1)$ according to $\mu_Q=\mu_P+\mathrm{gap}(1,-1)$ for $\mathrm{gap}\in\{0.3,0.7,1.0\}$. Larger gap values correspond to more distinguishable distributions.}
\label{fig:uc6_gap_scenarios}
\end{figure}

\subsubsection{Class imbalance results}
\label{app:results_imbalance}
 
Table \ref{tab:uc13_on_abs_error_x100_all_gaps_mean_std} reports absolute estimation errors with prior correction enabled (ON), expressed as $100\cdot|\widehat{D}_{\mathrm{JS}}-D_{\mathrm{JS}}^\star|$ and averaged
over seeds, for gaps $\in\{0.3,0.7,1.0\}$ and $N\in\{0.1,0.5\}$. At $N=0.1$, MLP and TabPFN consistently achieve the lowest errors across all gap values, while RF and XGBoost show larger variance. Marginal estimators
(Syndat, Synthcity) are unaffected by the correction, as their estimates derive from feature-wise divergences rather than classifier posteriors; their errors grow with the gap, reflecting increasing divergence between
the marginal distributions. As $N\to 1$, errors decrease across all classifier-based methods, confirming that the correction is most critical under severe imbalance. Runtime is dominated by MLP and RF, while TabPFN
remains below one second regardless of gap or imbalance ratio.

\begin{table}[ht]
\centering
\scriptsize
\setlength{\tabcolsep}{2.3pt}
\caption{Absolute Jensen-Shannon estimation error multiplied by 100 with prior correction, reported as mean $\pm$ std over seeds. Values are shown as $100 \cdot |\widehat{D}_{\mathrm{JS}} - D_{\mathrm{JS}}^{\star}|$; runtime rows report mean runtime in seconds.}
\label{tab:uc13_on_abs_error_x100_all_gaps_mean_std}
\begin{tabular}{llcccccccc}
\toprule
Gap & N & MLP & RF & LogReg & LR-Pol & XGBoost & TabPFN & Syndat & Synthcity \\
\midrule
0.3 & 0.1 & 2.80 $\pm$ 2.52 & 6.19 $\pm$ 4.09 & 5.23 $\pm$ 2.40 & 5.13 $\pm$ 3.97 & 5.21 $\pm$ 3.66 & 3.59 $\pm$ 3.22 & 3.18 $\pm$ 1.19 & 3.99 $\pm$ 0.82 \\
0.3 & 0.5 & 0.84 $\pm$ 0.32 & 2.47 $\pm$ 1.75 & 4.73 $\pm$ 2.09 & 1.78 $\pm$ 1.41 & 2.07 $\pm$ 2.09 & 0.87 $\pm$ 0.20 & 5.85 $\pm$ 0.32 & 6.25 $\pm$ 0.25 \\
\midrule
0.3 & Time (s) & 332.98 & 154.52 & 27.49 & 39.49  & 72.42 & 0.34  & 0.01 & 0.01 \\
\midrule
0.7 & 0.1 & 3.62 $\pm$ 3.77 & 11.03 $\pm$ 6.08 & 6.41 $\pm$ 7.29 & 5.23 $\pm$ 3.58 & 8.91 $\pm$ 6.84 & 4.89 $\pm$ 5.04 & 7.71 $\pm$ 1.80 & 9.55 $\pm$ 1.40 \\
0.7 & 0.5 & 1.53 $\pm$ 1.19 & 2.34 $\pm$ 2.18 & 4.39 $\pm$ 3.48 & 2.16 $\pm$ 2.45 & 2.95 $\pm$ 3.61 & 1.80 $\pm$ 1.15 & 9.89 $\pm$ 0.33 & 10.83 $\pm$ 0.28 \\
\midrule
0.7 & Time (s) & 251.21  & 137.89  & 28.02 & 45.81 & 71.68  & 0.34  & 0.01 & 0.01 \\
\midrule
1.0 & 0.1 & 4.14 $\pm$ 3.55 & 6.22 $\pm$ 3.32 & 6.76 $\pm$ 8.11 & 6.84 $\pm$ 4.93 & 12.22 $\pm$ 9.97 & 5.58 $\pm$ 5.39 & 12.12 $\pm$ 2.74 & 15.40 $\pm$ 2.06 \\
1.0 & 0.5 & 2.04 $\pm$ 1.69 & 2.30 $\pm$ 2.28 & 4.50 $\pm$ 4.32 & 2.58 $\pm$ 2.42 & 2.27 $\pm$ 2.22 & 2.29 $\pm$ 1.66 & 13.79 $\pm$ 0.42 & 15.33 $\pm$ 0.39 \\
\midrule
1.0 & Time (s) & 324.47 & 144.34 & 28.22 & 47.85 & 66.95 & 0.34 & 0.00 & 0.01  \\
\bottomrule
\end{tabular}
\end{table}

\subsubsection{High Dimensionality}
\label{app:results_dimension}

Table \ref{tab:uc7_abs_error_no_gt_mean_std_x100_2dec_time_range} reports the full numerical results for the high-dimensional experiment in Section \ref{subsec:results_highdim}. Absolute estimation error grows monotonically with $d$ for all methods, but the gap between classifier-based and marginal estimators widens substantially: at $d=50$,
Syndat and Synthcity incur errors roughly five times larger than the best classifier-based methods. Among classifier-based estimators, RF shows the largest degradation, while MLP, LogReg, and TabPFN remain competitive
across all dimensions. Runtime ranges reflect the combinatorial cost of LR-Pol, which exceeds 29 minutes at $d=50$, and the near-zero cost of marginal methods.

\begin{table}[ht]
\centering
\scriptsize
\setlength{\tabcolsep}{2.5pt}
\caption{Absolute Jensen-Shannon estimation error across dimensionalities, reported as mean $\pm$ std over 5 seeds. All divergence values are multiplied by 100 and shown with two decimals. The last row reports runtime as $[\min, \max]$ over the smallest and largest tested dimensions. Reported times correspond to a single representative run and include the full training pipeline, including model fitting and hyperparameter search when enabled.} 
\label{tab:uc7_abs_error_no_gt_mean_std_x100_2dec_time_range}
\resizebox{\textwidth}{!}{%
\begin{tabular}{ccccccccc}
\toprule
d & MLP & RF & LogReg & XGBoost & LR-Pol & TabPFN & Syndat & Synthcity \\
\midrule
\midrule
2 & 0.52 $\pm$ 0.35 & 0.82 $\pm$ 5.73 & 0.28 $\pm$ 5.74 & 0.83 $\pm$ 5.71 & 0.34 $\pm$ 5.77 & 0.16 $\pm$ 5.74 & 2.70 $\pm$ 4.84 & 3.22 $\pm$ 4.76 \\
10 & 1.38 $\pm$ 0.89 & 2.33 $\pm$ 7.08 & 0.47 $\pm$ 7.35 & 1.68 $\pm$ 7.04 & 0.48 $\pm$ 7.34 & 0.33 $\pm$ 7.35 & 23.75 $\pm$ 4.39 & 24.28 $\pm$ 4.38 \\
25 & 3.16 $\pm$ 0.61 & 9.73 $\pm$ 6.76 & 3.77 $\pm$ 7.41 & 6.25 $\pm$ 7.13 & 3.70 $\pm$ 7.34 & 3.37 $\pm$ 7.51 & 39.46 $\pm$ 5.41 & 39.99 $\pm$ 5.41 \\
40 & 7.19 $\pm$ 1.63 & 18.28 $\pm$ 5.78 & 7.94 $\pm$ 6.60 & 10.19 $\pm$ 6.54 & 7.94 $\pm$ 6.60 & 7.64 $\pm$ 6.53 & 53.02 $\pm$ 4.10 & 53.54 $\pm$ 4.07 \\
50 & 7.62 $\pm$ 2.44 & 23.61 $\pm$ 6.02 & 8.84 $\pm$ 7.58 & 11.97 $\pm$ 7.06 & 8.85 $\pm$ 7.58 & 8.62 $\pm$ 7.55 & 60.03 $\pm$ 4.34 & 60.56 $\pm$ 4.32 \\
\midrule
Time [min,max] (s) &  [64.99, 69.91] & [160.28, 545.74] & [0.44, 1.47] & [71.52, 317.88] & [32.17, 1754.08] & [0.55, 1.08] & [0.01, 0.09] & [0.01, 0.06] \\
\bottomrule
\end{tabular}}
\end{table}

\subsubsection{Real-world settings}
Tables \ref{tab:uc4_big_data_adult_intrusion_vae_ctgan} and \ref{tab:uc14_low_data_adult_intrusion_vae_ctgan} report $\widehat{D}_{\mathrm{JS}}$ estimates and runtimes for the high-sample and low-sample regimes, respectively, across all dataset-generator pairs. Estimates vary by orders of magnitude across estimator families,
consistent with the protocol-dependence discussed in Section \ref{subsec:real}. Notably, LogReg and marginal methods (Syndat, Synthcity) systematically report near-zero divergence even in cases where high-capacity estimators (XGBoost, TabPFN) detect large discrepancies, suggesting that low-capacity models underestimate nonlinear multivariate structure. Runtime differences are also substantial: LR-Pol requires up to seven hours on the \textit{Intrusion} dataset in the high-sample regime, while TabPFN completes estimation in
under two seconds.

\label{app:results_real}
\begin{table}[ht]
\centering
\scriptsize
\setlength{\tabcolsep}{2.2pt}
\caption{Big-data setting for adult and intrusion with VAE and CTGAN. Rows report mean $\pm$ standard deviation over seeds for $\widehat{D}_{\mathrm{JS}}$ and runtime.}
\label{tab:uc4_big_data_adult_intrusion_vae_ctgan}
\resizebox{\textwidth}{!}{%
\begin{tabular}{llrrrrrrrr}
\toprule
Generator & Dataset & MLP & RF & LogReg & XGBoost & LR-Pol & TabPFN & Syndat & Synthcity \\
\midrule
\multicolumn{10}{c}{\textbf{$\widehat{D}_{\mathrm{JS}}$}} \\
\midrule
VAE & adult & 0.19 $\pm$ 0.01 & 0.21 $\pm$ 0.01 & 0.03 $\pm$ 0.00 & 0.69 $\pm$ 0.03 & 0.05 $\pm$ 0.00 & 0.41 $\pm$ 0.01 & 0.01 $\pm$ 0.00 & 0.01 $\pm$ 0.00 \\
CTGAN & adult & 0.12 $\pm$ 0.01 & 0.17 $\pm$ 0.02 & 0.02 $\pm$ 0.01 & 0.67 $\pm$ 0.01 & 0.04 $\pm$ 0.01 & 0.38 $\pm$ 0.02 & 0.01 $\pm$ 0.00 & 0.01 $\pm$ 0.00 \\
VAE & intrusion & 0.92 $\pm$ 0.01 & 0.99 $\pm$ 0.01 & 0.05 $\pm$ 0.01 & 1.00 $\pm$ 0.00 & 0.22 $\pm$ 0.01 & 0.99 $\pm$ 0.00 & 0.04 $\pm$ 0.00 & 0.02 $\pm$ 0.00 \\
CTGAN & intrusion & 0.56 $\pm$ 0.03 & 0.84 $\pm$ 0.01 & 0.07 $\pm$ 0.01 & 0.90 $\pm$ 0.01 & 0.29 $\pm$ 0.01 & 0.83 $\pm$ 0.01 & 0.02 $\pm$ 0.00 & 0.01 $\pm$ 0.00 \\
\midrule
\multicolumn{10}{c}{\textbf{Time (s)}} \\
\midrule
VAE & adult & 932.60 & 425.61 & 54.80 & 158.24 & 3733.98 & 1.55  & 0.04 & 0.04  \\
CTGAN & adult & 955.84  & 402.26  & 50.42 & 151.99 & 3423.97 & 1.45 & 0.05  & 0.04 \\
VAE & intrusion & 1122.72 & 311.83  & 864.81  & 136.01  & 25275.99  & 1.73 & 0.18 & 0.12 \\
CTGAN & intrusion & 1163.12  & 296.91  & 734.26  & 164.56  & 28076.65 & 1.76 & 0.08  & 0.07  \\
\bottomrule
\end{tabular}%
}
\end{table}

\begin{table}[ht]
\centering
\scriptsize
\setlength{\tabcolsep}{2.2pt}
\caption{Low-data setting (10\%) for adult and intrusion with VAE and CTGAN. Rows report mean $\pm$ standard deviation over seeds for $\widehat{D}_{\mathrm{JS}}$ and runtime.}
\label{tab:uc14_low_data_adult_intrusion_vae_ctgan}
\resizebox{\textwidth}{!}{%
\begin{tabular}{llrrrrrrrr}
\toprule
Generator & Dataset & MLP & RF & LogReg & XGBoost & LR-Pol & TabPFN & Syndat & Synthcity \\
\midrule
\multicolumn{10}{c}{\textbf{$\widehat{D}_{\mathrm{JS}}$}} \\
\midrule
VAE & adult & 0.04 $\pm$ 0.03 & 0.09 $\pm$ 0.04 & 0.02 $\pm$ 0.02 & 0.46 $\pm$ 0.03 & 0.02 $\pm$ 0.02 & 0.29 $\pm$ 0.03 & 0.01 $\pm$ 0.00 & 0.01 $\pm$ 0.00 \\
CTGAN & adult & 0.03 $\pm$ 0.01 & 0.11 $\pm$ 0.03 & 0.03 $\pm$ 0.01 & 0.42 $\pm$ 0.07 & 0.02 $\pm$ 0.01 & 0.24 $\pm$ 0.02 & 0.02 $\pm$ 0.00 & 0.01 $\pm$ 0.00 \\
CTGAN & intrusion & 0.24 $\pm$ 0.10 & 0.73 $\pm$ 0.06 & 0.03 $\pm$ 0.02 & 0.86 $\pm$ 0.04 & 0.07 $\pm$ 0.04 & 0.81 $\pm$ 0.03 & 0.02 $\pm$ 0.00 & 0.01 $\pm$ 0.00 \\
VAE & intrusion & 0.70 $\pm$ 0.12 & 0.95 $\pm$ 0.02 & 0.00 $\pm$ 0.00 & 0.96 $\pm$ 0.02 & 0.00 $\pm$ 0.01 & 0.99 $\pm$ 0.00 & 0.04 $\pm$ 0.00 & 0.01 $\pm$ 0.00 \\

\midrule
\multicolumn{10}{c}{\textbf{Time (s)}} \\
\midrule
VAE & adult & 29.04 $\pm$ 5.97 & 119.54 $\pm$ 8.54 & 40.21 $\pm$ 1.21 & 72.82 $\pm$ 4.59 & 743.07 $\pm$ 98.31 & 0.47 $\pm$ 0.21 & 0.01 $\pm$ 0.00 & 0.01 $\pm$ 0.00 \\
CTGAN & adult & 27.83 $\pm$ 7.16 & 120.73 $\pm$ 12.25 & 39.92 $\pm$ 2.61 & 69.97 $\pm$ 4.38 & 783.67 $\pm$ 96.29 & 0.32 $\pm$ 0.07 & 0.01 $\pm$ 0.00 & 0.01 $\pm$ 0.00 \\
VAE & intrusion & 53.62 $\pm$ 22.31 & 119.61 $\pm$ 11.68 & 92.08 $\pm$ 10.16 & 63.14 $\pm$ 1.84 & 1878.24 $\pm$ 498.49 & 0.49 $\pm$ 0.16 & 0.02 $\pm$ 0.00 & 0.02 $\pm$ 0.00 \\
CTGAN & intrusion & 53.66 $\pm$ 46.11 & 125.71 $\pm$ 2.09 & 129.30 $\pm$ 28.57 & 71.08 $\pm$ 1.13 & 2452.68 $\pm$ 103.78 & 0.41 $\pm$ 0.00 & 0.02 $\pm$ 0.00 & 0.02 $\pm$ 0.00 \\
\bottomrule
\end{tabular}%
}
\end{table} 

\subsection{Software package and reproducibility}
\label{app:software}

We provide an open-source Python package implementing the full $\DJSjoint$ estimation pipeline introduced in this work. The package is designed to be modular, extensible, and easily integrated into existing synthetic data evaluation workflows.

\paragraph{Core functionality.}
The package includes a unified interface for all classifier-based estimators considered in this paper (MLP, RF, Logistic Regression, LR-Pol, XGBoost, TabPFN), with automated hyperparameter search enabled by default. It also implements the closed-form prior correction for class imbalance (Section \ref{subsec:prior_correction}), which is applied automatically when sample sizes differ. The API is designed to closely mirror \texttt{\href{https://docs.scipy.org/doc/scipy/reference/generated/scipy.spatial.distance.jensenshannon.html}{scipy.spatial.distance.jensenshannon}} \cite{scipy_jensenshannon}, enabling users to estimate $\DJSjoint$ with a single function call.

\paragraph{Benchmarking utilities.}
To support systematic evaluation, we provide utilities for computing MC reference estimation of $\DJS$ in controlled Gaussian settings, facilitating benchmarking and comparison of new estimators.

\paragraph{Reproducibility.}
We release all experimental configurations from Section \ref{sec:results}, including random seeds, dataset splits, and evaluation protocols, enabling exact reproduction of the reported results.

The full estimation pipeline is available as an open-source Python package:
\begin{itemize}
    \item \textbf{GitHub:} \url{https://github.com/AlbaGarridoLopezz/jensenshannondivergence}
    \item \textbf{PyPI:} \url{https://pypi.org/project/jensenshannondivergence/}
    \item \textbf{Installation:} \texttt{pip install jensenshannondivergence}
\end{itemize}

Our goal is for this package to serve as a community reference for reliable 
$\DJSjoint$ estimation in synthetic data evaluation, lowering the barrier to 
principled, estimator-aware reporting.


\end{document}